\documentclass[11pt,a4paper]{article}
\usepackage[hyperref]{emnlp2020}
\usepackage{times}
\usepackage[utf8]{inputenc} % allow utf-8 input
\usepackage[T1]{fontenc}    % use 8-bit T1 fonts
\usepackage{hyperref}       % hyperlinks
\usepackage{url}            % simple URL typesetting
\usepackage{booktabs}       % professional-quality tables
\usepackage{amsfonts}       % blackboard math symbols
\usepackage{nicefrac}       % compact symbols for 1/2, etc.
\usepackage{microtype}      % microtypography

\aclfinalcopy

\usepackage{amsmath}
\usepackage{times}
\usepackage{url}
\usepackage{morefloats}
\usepackage{graphicx}
\usepackage{pgfplots}
\pgfplotsset{compat=1.3}
\usetikzlibrary{shapes,shadows,positioning,calc,patterns}
\usepackage{multirow}
\usepackage{picinpar}
\usepackage{changepage}
\usepackage{epstopdf}
\usepackage{breqn}
\usepackage{verbatim}
\usepackage{wrapfig}
\usepackage{subcaption}

\usepackage[ruled,vlined]{algorithm2e}

\input{Definitions}

\def\UNA{$\mathrm{UNAS}$}
\def\DNA{$\mathrm{DNAS}$}
\def\HNA{$\mathrm{HNAS}$}

\newcommand{\squad}{SQuAD\xspace}

\title{Attention that does not Explain Away}
%\title{Attention Modeling using Probabilistic and Optimization Insights}
%\title{Doubly-normalized Attention}

\author{Nan Ding \\ Google \\\And Xinjie Fan \\ UT Austin \\\And Zhenzhong Lan \\ Google \\\And Dale Schuurmans \\Google \\\And Radu Soricut \\Google}

\begin{document}
\maketitle

\begin{abstract}
Models based on the Transformer architecture have achieved better accuracy than the ones based on competing architectures for a large set of tasks.
A unique feature of the Transformer is its universal application of a self-attention mechanism, which allows for free information flow at arbitrary distances.
Following a probabilistic view of the attention via the Gaussian mixture model, we find empirical evidence that the Transformer attention tends to ``explain away'' certain input neurons.
To compensate for this, we propose a doubly-normalized attention scheme that is simple to implement and provides theoretical guarantees for avoiding the ``explaining away'' effect without introducing significant computational or memory cost.
Empirically, we show that the new attention schemes result in improved performance on several well-known benchmarks.
\end{abstract}

\section{Introduction}
The Transformer architecture~\citep{vaswani2017attention} has been successfully used to improve state-of-the-art performance in a variety of machine learning tasks, such as machine translation~\citep{vaswani2017attention,dehghani2019universal}, language modeling~\citep{devlin2018bert,yang2019xlnet}, summarization~\citep{cohan2018discourseaware,goodman2019multistage}, dialog~\citep{mazare2018training,cheng2019dynamic}, image captioning~\citep{sharma2018conceptual,zhao2019informative}, and visual question answering~\citep{yu2019deep,tan2019lxmert}.
One of the most important components of the Transformer architecture is its self-attention mechanism, applied universally to both the encoder and the decoder components.
This attention mechanism allows for information to freely flow between inputs at arbitrary distances, which is intuitively appealing for modeling natural language or tasks that need to model cross-modal relationships between their inputs.

Despite the empirical success of the self-attention mechanism, little formal work has been done to analyze its statistical properties and relate it to previously known classical models.
Better understanding its properties can lead to insights into what it does and does not do well.
This in turn can lead to improvements to the attention mechanism and ultimately to a better-performing Transformer network.

In this paper, we closely study the Transformer attention formulation from a probabilistic view via the Gaussian mixture model.
If we consider the Transformer model as a stack of layers with data flowing from lower to upper layers, then the output neurons (from the upper layer) of an attention unit can be regarded as the most likely data generated by a Gaussian mixture model (GMM), while the input neurons (from the lower layer) of the attention unit act as the Gaussian centers.

Our insight here is that this Transformer attention scheme has an ``explaining away'' effect, which means that the information present in certain lower layer neurons may be filtered out completely.
This is because for a GMM, not all Gaussian centers (lower layer neurons) are required to contribute in generating output data (upper layer neurons).
The information of the centers that do not generate data is lost after observing the data. This "explaining-away" effect is related to the one in the directed graphical model, in the sense that the existence of the few contributed lower neurons "explain away" the other muted lower neurons on generating upper neurons.

In order to compensate for this, we describe an alternative probabilistic model for attention, in which the role of the upper and lower layers in the GMM formulation are reversed.
This new attention scheme requires all the generated data (lower layer neurons) to be explained by at least one Gaussian center (upper layer neurons).
Therefore, it guarantees the preservation of information for all lower layer neurons, as we prove in this paper.

%Our insight here is that a formulation in which the upper layer acts as the generated data while the lower layer acts as Gaussian centers aligns well with the purpose of a {\em decoding} mechanism, where a final upper layer ultimately generates outputs.
%However, this design does not correspond well to the purpose of an {\em encoding} mechanism, in which the bottom layer takes input data and the upper layer encodes the data for representation purposes.
%To address this mismatch, we describe in this paper a new attention mechanism, in which the role of the upper and lower layers in the GMM formulation are reversed: the lower layer represents the data, while the upper layer represents the Gaussian centers.
The MLE equation of the reversed GMM model leads to a simple attention update that is similar to the original one, except for the attention weight normalization.
The original Transformer attention scheme only normalizes the attention weights once for every upper-layer neuron.
By contrast, our new attention mechanism requires a two-step attention weight normalization procedure:
the first normalizes each lower-layer neuron, and the second normalizes each upper-layer neuron.
In the rest of this paper, we denote the original, upper normalized attention scheme as \UNA, and the new doubly-normalized attention scheme as \DNA.

%It is worth noting that our \emph{doubly-normalized} attention is not \emph{doubly-stochastic}.
%After applying \DNA, the attention weights of the lower layer neurons are not normalized, since the upper layer normalization in the second step of \DNA~\emph{denormalizes} the lower layer.
We also show that \DNA~updates correspond exactly to one iteration of the Sinkhorn algorithm~\citep{peyre19computational} in a constrained optimization problem.
As a result, iterating \DNA~until convergence results in a \emph{doubly-stochastic} attention matrix where the attention weights of all upper and lower neurons are normalized.
We also showed that \UNA~can be formulated in a similar constrained optimization problem, except that the optimization problem of \UNA~does not have the constraint which presents ``explaining away'' compared to \DNA.

Mathematically, we also formalize the concept of ``explaining away'' of a lower neuron by using the sum of its attention weights.
We prove that the attention weights sum of the lower neurons of \DNA~are lower bounded by 1/(sequence length), therefore completely avoid the ``explaining away'' effect of \UNA.

Last but not least, we formulate a hybrid attention scheme, \HNA, that dynamically combines both attention schemes, and can provide a handle on a {\em task-based} preference between \UNA~and \DNA, as resulting from the learning algorithm.
We perform empirical studies and obtain clear numerical improvements using \DNA~and \HNA~formulation in several well-known benchmarks, with minor computational overhead and negligible increase of model size.

\section{Transformer Attention and Gaussian Mixture Models}
\label{sec:gmm-view}
In this section, we review the Transformer self-attention mechanism and analyze how it relates to the Gaussian Mixture Model.

Assuming a sequence of length $S$, we first focus on the Transformer single-headed attention formulation involving two layers of neurons: the lower-layer neurons are the input representations denoted as $\xb_j$ at position $j \in \cbr{1, \ldots, S}$,
and the upper-layer neurons are the output representations denoted as $\yb_i$ at position $i \in \cbr{1, \ldots, S}$.
We assume both $\xb_j$ and $\yb_i$ are 1-d tensors of the same size $D$.

The self-attention mechanism first transforms the input representations $\xb_j$ to queries and keys by applying $\qb_j = \Qb \xb_j$ and $\kb_j = \Kb\xb_j$, where $\Qb$ and $\Kb$ are trainable transformation matrices of size $D\times D$.
The value of an upper-layer neuron $\yb_i$ is computed as the weighted sum over the lower-layer neurons $\xb_j$ followed by the value transformation $\Vb$ of size $D\times D$,
\begin{align}
  \yb_i &= \sum_j \pi_{ij} \Vb\xb_j,  \label{eq:transformer}\\
  \text{where,}\; \pi_{ij} &= \frac{\exp(\qb_i^\top \kb_j)}{\sum_j \exp(\qb_i^\top \kb_j)}.\nonumber
\end{align}
Since in this formulation the attention weights $\pi_{ij}$ are normalized for every upper layer neuron $i$ over the lower layer neurons $j$, we refer to this attention scheme as upper-normalized attention, \UNA.

\subsection{Relation to GMM}
\label{sec:upper_gmm}
The \UNA~scheme \eqref{eq:transformer} relates to a Gaussian mixture model (GMM) in the following way.
Let us use $\kb_j$ to denote the positions of the Gaussian cluster centers, and the cluster priors denoted as $\alpha_j$, satisfying $\sum_j \alpha_j = 1$.
The generated data position is denoted as $\qb_i$.
If we assume the variance of the Gaussian distributions to be equal to 1\footnote{These assumptions are only needed to interpret the vanilla Transformer attention using GMM. Relaxing these assumptions does not affect derivations, and will lead to different forms of attention. Moreover, since the projection matrix $\Qb$, $\Kb$ are learnable, one can absorb the covariance into $\Qb$ and $\Kb$ and reparameterize to a Gaussian with unit variance. }, then the log-likelihood of the GMM is:
\begin{align*}
  \sum_i \log p(\qb_i) &= \sum_i \log \rbr{\sum_j \alpha_j \Ncal(\qb_i | \kb_j, 1)}.
\end{align*}
We can compute the optimal $\qb_i$ by taking the derivative of $\qb_i$ and solve the following equation,
\begin{align*}
   0=&\frac{\partial}{\partial \qb_i} \sum_i \log p(\qb_i) \\
   %= \frac{\partial}{\partial \qb_i} \log p(\qb_i) \\
   %&= \frac{\sum_j \alpha_j \frac{\partial}{\partial \qb_i} \Ncal(\qb_i | \kb_j, 1)}{\sum_j \alpha_j \Ncal(\qb_i | \kb_j, 1)} \\
   =& \frac{\sum_j \alpha_j \Ncal(\qb_i | \kb_j, 1) \frac{\partial \log \Ncal(\qb_i | \kb_j, 1)}{\partial \qb_i}}{\sum_j \alpha_j \Ncal(\qb_i | \kb_j, 1)}.
\end{align*}
If we assume the cluster priors\footnote{The cluster prior $\alpha_j$ favors the neurons with larger $|\kb_j|$, which intuitively are the ones carrying more information.} as $\alpha_j \propto \exp(\frac{1}{2} \kb_j^\top \kb_j)$, we have
\begin{align}
    \pi_{ij} &\triangleq \frac{\alpha_j \Ncal(\qb_i | \kb_j, 1)}{\sum_j \alpha_j \Ncal(\qb_i | \kb_j, 1)} \nonumber\\
    % & = \frac{\alpha_j \exp(-\frac{1}{2} (\qb_i - \kb_j)^\top (\qb_i - \kb_j))}{\sum_j \alpha_j \exp(-\frac{1}{2} (\qb_i - \kb_j)^\top (\qb_i - \kb_j))}  \\
   % & = \frac{\alpha_j \exp(-\frac{1}{2} (\qb_i^\top \qb_i - 2 \qb_i^\top \kb_j + \kb_j^\top \kb_j))}{\sum_j \alpha_j \exp(-\frac{1}{2} (\qb_i^\top \qb_i - 2 \qb_i^\top \kb_j + \kb_j^\top \kb_j))}  \nonumber\\
    &= \frac{\alpha_j \exp(\qb_i^\top \kb_j -\frac{1}{2} \kb_j^\top \kb_j)}{\sum_j \alpha_j \exp(\qb_i^\top \kb_j -\frac{1}{2} \kb_j^\top \kb_j)}\nonumber\\
    &= \frac{\exp(\qb_i^\top \kb_j)}{\sum_j \exp(\qb_i^\top \kb_j)}. \label{eq:pi_lower}
\end{align}
Using the fact that $\sum_j \pi_{ij} = 1$ and $\frac{\partial \log \Ncal(\qb_i | \kb_j, 1)}{\partial \qb_i} = \kb_j - \qb_i$, we obtain a fixed-point equation:
\begin{align}
  \qb_i =& \sum_j \pi_{ij} \kb_j. \label{eq:lower_attention}
\end{align}
If we compare Eq.~\eqref{eq:lower_attention} with Eq.~\eqref{eq:transformer}, the Gaussian cluster centers $\kb_j$ play exactly the same role as the key representation $\kb_j$ of the lower-layer neurons in Eq.~\eqref{eq:transformer}. The data position $\qb_i$ in Eq.\eqref{eq:pi_lower} plays the same role as the query representation $\qb_i$ in Eq.~\eqref{eq:transformer}. By iterating the fixed-point equation \eqref{eq:lower_attention} for one iteration, the new data position $\qb_i^{new} = \sum_j \pi_{ij} \kb_j$ corresponds to the upper layer neuron $\yb_i$ in Eq.~\eqref{eq:transformer} after applying the transformation $\Vb \Kb^{-1}$.

Note that computing the most-likely data positions $\qb_i$ given the Gaussian centers is non-standard for probabilistic inference. A more natural way would be the MLE estimation for the Gaussian centers given the data. That is exactly what doubly-normalized attention corresponds to, as we will discuss in the next section.

\subsection{Multi-head attention}
The multi-head ($H$ heads) attention can be derived similarly.
The lower neurons $\xb_j$ are projected into $H$ heads with different $\qb_j^h = \Qb^h \xb_j$ and $\kb_j^h = \Kb^h \xb_j$ where $\Qb^h$ and $\Kb^h$ are transformation matrices of size $\frac{D}{H} \times D$.
This yields $H$ outputs {$\yb^h_i$},
\begin{align}
  \yb^h_i &= \sum_j \frac{\exp(\qb_i^{h\top} \kb_j^h)}{\sum_j \exp(\qb_i^{h\top} \kb_j^h)} \Vb^h \xb_j, \label{eq:multihead}
\end{align}
where $\Vb^h$ is the value transformation matrix of size $\frac{D}{H} \times D$.
Similar to \eqref{eq:transformer}, \eqref{eq:multihead} corresponds to a GMM followed by value transformations\footnote{Some special treatments are needed to handle the value transformation since $\Kb^h$ is no longer square matrices. See the details in the Appendix.}.
$H$-heads attention corresponds to $H$ GMMs followed by value transformations.
The final output is a concatenation of all $H$ heads: $\yb_i = \text{concat}({\yb^h_i})$.

\section{Doubly-normalized Attention}
As we have shown, in the original \UNA~scheme, the lower layer neuron representations correspond to the Gaussian centers, while the upper layer neuron representations correspond to the data generated from these centers. The maximization with respect to the data positions is unnatural.
%This flow of information aligns well with the goal of a decoding mechanism, in which the upper layer generates the decoded outputs.
%However, for an encoding mechanism, the information flows in the other direction: data comes from the lower layer, while the role of the upper layer is to encode the signal from the lower-layer.
%As such, the current design of the Transformer self-attention mechanism, in which the \UNA~scheme is applied universally for both the encoder and the decoder networks, does not align well with the information flow of the encoding process.
In addition, the formulation has an ``explaining away'' effect, because for a GMM, not all Gaussian centers (lower layer neurons) are required to contribute in generating output data (upper layer neurons).
As a result, the information of the centers that do not generate data is completely lost.
For tasks such as summarization, ``explaining away'' may be acceptable, while for other tasks such as visual question answering and language modeling, the attention mechanism may benefit from a more ``conservative'' formulation, with the upper layer preserving the neural information at all positions.

To this end, we propose to reverse the role of the upper and lower layers in the GMM, so that all the generated data (lower layer neurons) will be explained by at least one Gaussian center (upper layer neurons).
This results in a new \emph{doubly-normalized attention} scheme (\DNA) (the derivation will be given shortly):
\begin{align}
  \yb_i &= \sum_j  \frac{\xi_{ij}}{\sum_j \xi_{ij}} \Vb\xb_j,  \label{eq:doubly_transformer}\\
  \text{where,}\;  \xi_{ij} &= \frac{\exp(\qb_i^\top \kb_j)}{\sum_i \exp(\qb_i^\top \kb_j)}.\nonumber
\end{align}
Comparing \eqref{eq:transformer} with \eqref{eq:doubly_transformer}, the only difference between the two is the normalization process of the attention weights.
The \DNA~scheme applies two normalization steps: first for each lower layer neuron $j$ and then for each upper layer neuron $i$.

\subsection{Relation to GMM}
We present here the derivation of \eqref{eq:doubly_transformer} from a GMM.
When we reverse the role of the upper and lower layers, we use $\qb_i$ to denote the Gaussian centers and $\kb_j$ as the data generated by GMM.
The log-likelihood function of the GMM is:
\begin{align}
   \sum_j \log p(\kb_j) &= \sum_j \log \rbr{\sum_i \beta_i \Ncal(\kb_j | \qb_i, 1)}, \label{eq:gmm_upper}
\end{align}
where the priors $\beta_i$ satisfy $\sum_i \beta_i = 1$.
We take the gradient with respect to $\qb_i$,
\begin{align*}
  &\frac{\partial}{\partial \qb_i} \sum_j\log p(\kb_j) \\
  %=& \sum_j \frac{\beta_i \frac{\partial}{\partial \qb_i} \Ncal(\kb_j | \qb_i, 1)}{\sum_i \beta_i \Ncal(\kb_j | \qb_i, 1)} \\
  =& \sum_j \frac{\beta_i \Ncal(\kb_j | \qb_i, 1) \frac{\partial}{\partial \qb_i} \log \Ncal(\kb_j | \qb_i, 1)}{\sum_i \beta_i \Ncal(\kb_j | \qb_i, 1)}.
\end{align*}
Define
\begin{align}
  \xi_{ij} \triangleq& \frac{\beta_i \Ncal(\kb_j | \qb_i, 1)}{\sum_i \beta_i \Ncal(\kb_j | \qb_i, 1)}\nonumber\\
  =& \frac{\beta_i \exp(\qb_i^\top \kb_j -\frac{1}{2} \qb_i^\top \qb_i)}{\sum_i \beta_i \exp(\qb_i^\top \kb_j -\frac{1}{2} \qb_i^\top \qb_i)}  \label{eq:upper_xi}
\end{align}

At optimum $\frac{\partial}{\partial \qb_i} \sum_j \log p(\kb_j) = 0$, we have $0 = \sum_j \xi_{ij} (\qb_i - \kb_j)$, and therefore the fixed-point equation is,
\begin{align}
    \qb_i &= \sum_j \frac{\xi_{ij}}{\sum_j \xi_{ij}} \kb_j. \label{eq:doubly_attention}
\end{align}
By iterating the fixed-point equation \eqref{eq:doubly_attention} for one iteration and assuming $\beta_i \propto \exp(\frac{1}{2} \qb_i^\top \qb_i)$, then the new center position $\qb_i^{new} = \sum_j \frac{\xi_{ij}}{\sum_j \xi_{ij}} \kb_j$ is equivalent to the upper layer neuron $\yb_i$ of Eq.~\eqref{eq:doubly_transformer}, modulo a transformation matrix $\Vb \Kb^{-1}$.

Similar to the \UNA, it is also straightforward to extend the above derivations to the multi-head ($H$-heads) \DNA~scheme, where it would be $H$ GMMs followed by value transformations.

\subsection{Relation to Double Stochasticity}
It should be emphasized that our \emph{doubly-normalized} attention is not \emph{doubly-stochastic} (where the columns and rows of the attention matrix $\pi_{ij}$ all sum to 1). After applying \DNA, the attention weights of the lower layer neurons are not normalized, since the upper layer normalization in the second step of \DNA~\emph{denormalizes} the lower layer. However, as we show in the following, \emph{doubly-stochastic} attention can be achieved by applying the two normalization steps for multiple iterations until convergence.

Consider the following constrained optimization problem that characterizes $\pi_{ij}$,
\begin{align}
 &\min_{\pi} \; \sum_{ij} \pi_{ij} D(\qb_i, \kb_j)  + \pi_{ij} \log \pi_{ij}\nonumber\\
 &\text{s.t.} \;\; \sum_i \pi_{ij} = 1,\;\; \sum_j \pi_{ij} = 1.  \label{eq:opt_doubly}
\end{align}
This problem is well-known in the optimal transport literature. The classical iterative algorithm for finding the solution is called the Sinkhorn algorithm~\citep{peyre19computational}, which uses the initial condition $\pi_{ij}^0 = \exp(-D(\qb_i, \kb_j))$ and iterates
\begin{align}
\xi_{ij}^{t}=\frac{\pi_{ij}^{t-1}}{\sum_i \pi_{ij}^{t-1}}, \;\;
\pi_{ij}^{t}=\frac{\xi_{ij}^{t}}{\sum_j \xi_{ij}^{t}}. \label{eq:sinkhorn}
\end{align}
If we write $D(\qb_i, \kb_j) := -\qb_i^\top \kb_j$ then the doubly-normalized attention weights computed in Eq.~\eqref{eq:doubly_transformer} correspond exactly to the updates~\eqref{eq:sinkhorn} of the Sinkhorn algorithm for one iteration.
If more iterations are applied, the attention weights will eventually satisfy both constraints in \eqref{eq:opt_doubly}, and become \emph{doubly-stochastic}. One question is whether \DNA~could perform better with more iterations for the updates in Eq.~\eqref{eq:sinkhorn}. Empirically, we find that adding more update iterations increases computational time but does not improve performance.

Interestingly, the attention weights of the original \UNA~scheme can be obtained from a very similar constrained optimization except that the normalization constraint on the lower layer neurons $j$ is removed:
\begin{align}
 &\min_{\pi} \; \sum_{ij} \pi_{ij} D(\qb_i, \kb_j)  + \pi_{ij} \log \pi_{ij} \nonumber\\
 &\text{s.t.} \;\; \sum_j \pi_{ij} = 1.  \label{eq:opt_lower}
\end{align}
Introducing the Lagrange multipliers $\lambda_i$, this formulation is equivalent to optimizing the Lagrangian,
%\begin{align*}
%L(\pi_{ij}, \lambda_i) = \sum_{ij} \pi_{ij} D(\qb_i, \kb_j)  + \pi_{ij} \log %\pi_{ij} + \sum_i \lambda_i \rbr{\sum_j \pi_{ij} - 1}
%\end{align*}
whose gradient with respect to $\pi_{ij}$ gives
\begin{align*}
\frac{\partial L(\pi_{ij}, \lambda_i)}{\partial \pi_{ij}} = D(\qb_i, \kb_j)  + 1 + \log \pi_{ij} + \lambda_i,
\end{align*}
and leads to the same attention weights as in Eq.~\eqref{eq:transformer} when $D(\qb_i, \kb_j) := -\qb_i^\top \kb_j$.

Comparing the two constrained optimization problems in \eqref{eq:opt_lower} and \eqref{eq:opt_doubly}, the removal of the constraint in \eqref{eq:opt_lower} allows solutions in which a lower-layer neuron $j$ has an arbitrary contribution to the upper layer, causing the ``explaining-away'' effect.

\subsection{Relation to Capsule Networks}
\label{sec:capsule}
It is also worth noting that \DNA~is related to the EM routing algorithm in the capsule networks~\citep{hinton18matrix}.
In particular, the vote matrix $V_{ij}$ in~\citep{hinton18matrix} is similar to $\kb_j$ in Eq.~\eqref{eq:gmm_upper}; the new pose matrix $\mu_j$ in~\citep{hinton18matrix} is similar to $\qb_i$ in Eq.~\eqref{eq:gmm_upper}.
However, unlike CapsuleNet, there is no variance $\sigma_i^2$ and $\beta_i$ estimation in \DNA, as we find that estimating variance $\sigma_i^2$ significantly hurts the empirical performance of the \DNA~algorithm. In addition, we only iterate the fixed-point equation \eqref{eq:doubly_attention} for one iteration, as more iterations are computationally expensive and does not improve the performance.

\section{Doubly-Normalized Attention Avoids Explaining Away}
\label{sec:explain_away}
In this section, we formalize the definition of ``explaining-away'' and compare \UNA~and \DNA~theoretically and empirically with respect to the ``explaining-away'' phenomenon.
\begin{definition}
In an attention unit, a lower-layer neuron $j$ is considered $\epsilon$-``explained away'', if the sum of the attention weights over the upper layer neurons $\sum_i \pi_{ij}$ is less than $\epsilon$.
\end{definition}
We consider $\epsilon$ to be some small value (fixed at $10^{-8}$ in the rest of this paper).
For the original Transformer \UNA, the only constraint in \eqref{eq:opt_lower} is $\sum_j \pi_{ij} = 1$. It does not require all lower layer neurons to be attended by the upper layer.
Therefore, for a certain lower-layer neuron $j$, the total attention weights to the upper layer $\sum_i \pi_{ij}$ can be as low as 0 so that it is $\epsilon$-``explained away''.
%which means the contribution of the lower neuron $j$ is completely filtered out in the upper layer. %This ``explaining-away'' phenomenon can potentially have negative effects for some applications: if early layers filter out the contribution of certain inputs that could be useful at later stages, that information can no longer be recovered.
% \begin{figure}[hbt!]
% \centering
% \begin{subfigure}{.35\textwidth}
% \centering
%   \includegraphics[width=1.\linewidth]{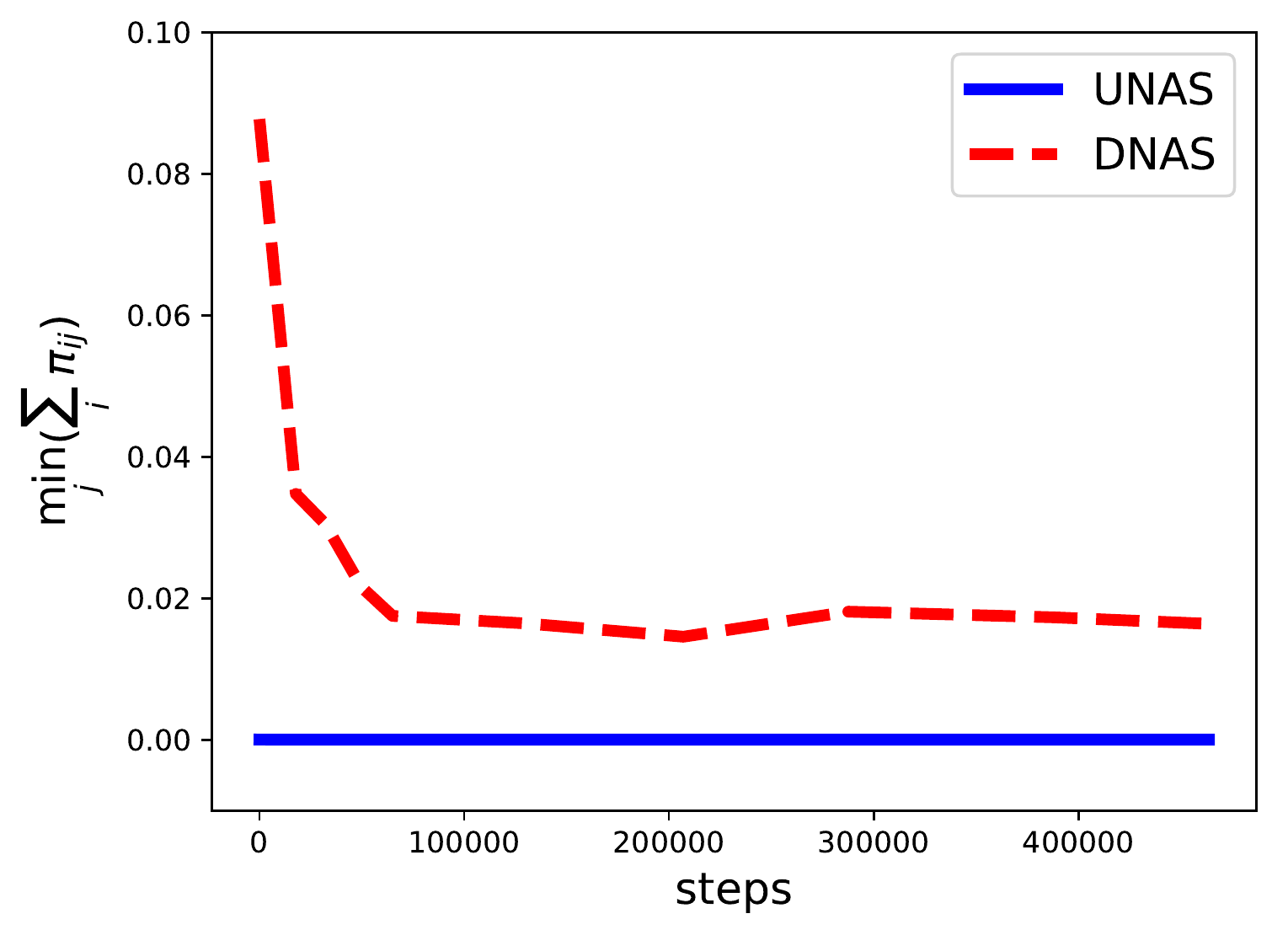}
% \end{subfigure}
% \caption{The minimum of attention weight sum in \UNA~and \DNA. }
% \label{fig:min_low_weights}
% \end{figure}

In contrast, the \DNA~scheme attempts to optimize the objective with both lower and upper layer normalization constraints \eqref{eq:opt_doubly} by one iteration of the Sinkhorn algorithm.
It turns out that this is sufficient to avoid the ``explaining-away'' phenomenon.
The following theorem formalizes this fact by showing that each lower-layer neuron contributes with a total attention weight of at least $1/S$, where $S$ is the sequence length.
\begin{theorem}
For any lower-layer neuron $j$, the sum of the doubly-normalized attention weights over the upper layer neurons $\sum_i \pi_{ij} = \sum_i \frac{\xi_{ij}}{\sum_j \xi_{ij}}$ is lower bounded by $1/S$.
\end{theorem}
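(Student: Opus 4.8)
The plan is to exploit the single property of $\xi_{ij}$ that we actually need, namely that it is normalized over the upper index. Reading off the definition in \eqref{eq:doubly_transformer}, for every fixed lower-layer neuron $j$ we have $\sum_i \xi_{ij} = 1$, and every entry is strictly positive. The quantity of interest, $\sum_i \pi_{ij} = \sum_i \xi_{ij}/(\sum_{j'} \xi_{ij'})$, differs from the column sum $\sum_i \xi_{ij} = 1$ only through the per-row denominators $\sum_{j'} \xi_{ij'}$, so the whole argument reduces to controlling how large those denominators can be.

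First I would observe that, because column $j$ sums to one and all entries are nonnegative, each single entry obeys $\xi_{ij} \le \sum_{i'} \xi_{i'j} = 1$. Next, since the attention matrix is $S \times S$ (both indices range over $\{1,\dots,S\}$), each row denominator is a sum of exactly $S$ such entries, which gives the uniform bound
\begin{align*}
\sum_{j'} \xi_{ij'} \;\le\; S \qquad \text{for every } i.
\end{align*}
Substituting this into the definition of $\pi_{ij}$ yields $\pi_{ij} = \xi_{ij}/\sum_{j'} \xi_{ij'} \ge \xi_{ij}/S$, and summing over $i$ together with the column-normalization of $\xi$ gives
\begin{align*}
\sum_i \pi_{ij} \;\ge\; \frac{1}{S}\sum_i \xi_{ij} \;=\; \frac{1}{S},
\end{align*}
which is the claimed bound.

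There is no serious obstacle here; the entire content is the denominator bound $\sum_{j'} \xi_{ij'} \le S$, and the only thing to get right is the source of leverage. It is the \emph{first} (lower-layer) normalization step that does the work: it forces every entry to be at most $1$, which in turn caps each row sum by the sequence length. Had $\xi$ not been column-normalized, individual entries could be arbitrarily large and the row denominators unbounded, so no positive lower bound would survive — which is exactly why \UNA, omitting this step, permits $\sum_i \pi_{ij}$ to drop to $0$ and thereby allows explaining-away.
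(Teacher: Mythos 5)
Your proof is correct and is essentially the paper's own argument: both hinge on the observation that column-normalization forces every entry $\xi_{ij}\le 1$, hence every row denominator $\sum_{j'}\xi_{ij'}\le S$, after which $\sum_i \pi_{ij}\ge \frac{1}{S}\sum_i\xi_{ij}=\frac{1}{S}$. The paper phrases the denominator bound as $\max_i(\sum_j \xi_{ij})\le\sum_j\max_i(\xi_{ij})\le S$, but this is the same estimate stated slightly less directly than your uniform per-row bound.
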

\begin{proof}
Since $\sum_i \xi_{ij} = 1$,
\begin{align*}
  &\sum_i \frac{\xi_{ij}}{\sum_j \xi_{ij}} \\
  \ge& \sum_i \frac{\xi_{ij}}{\max_i (\sum_j \xi_{ij})} = \frac{\sum_i \xi_{ij}}{\max_i (\sum_j \xi_{ij})} \\ %= \frac{1}{\max_i (\sum_j \xi_{ij})} \\
  \ge& \frac{1}{\sum_j \max_i (\xi_{ij})} \ge \frac{1}{S}
\end{align*}
\end{proof}
We illustrate the difference between the two attention schemes, and how different they behave in practice with respect to the ``explaining-away'' phenomenon, using the multi-view attention model (with a single-layer, single-head attention) described in the VQA experiments later.
Fig.~\ref{fig:hist_low_weights} shows the histogram distribution of $\log_e (\sum_i \pi_{ij})$ between \UNA~and \DNA.
As the graph indicates, a large proportion of the \UNA~attention weights-sum is $\epsilon$-''explained-away'' ($\log_e$ values < $-20$), meaning that the information of only a few of the lower neurons are passed to the upper layer.
In contrast, \DNA~preserves more information from all lower layer neurons, as indicated by their weights-sum log values (> $-\log_e S$, where $S=100$).

Finally, we would like to emphasize that \DNA~does not work against attention sparsity. It allows the attention map $\pi_{ij} = 0$ between any pairs of neurons. What it forbids is the 0 total “contribution” of any lower neuron $j$: $\sum_{i} \pi_{ij} = 0$. Therefore, our method is compatible with existing faster sparse attention structures such as \cite{parmar2018image}.
%Fig.~\ref{fig:min_low_weights} shows the minimum attention weight sum of the lower layer neurons,  achieved by both \UNA~and \DNA~during training.
%As predicted by our analysis, the \UNA~scheme has $\min_j (\sum_i \pi_{ij})$ close to 0 (meaning that it explains-away at least some of its inputs), while the \DNA~scheme maintains $\min_j (\sum_i \pi_{ij})$ above its lower-bound value of $1/L=0.01$.
\begin{figure}[hbt!]
\centering
\begin{subfigure}{.35\textwidth}
\centering
  \includegraphics[width=1.\linewidth]{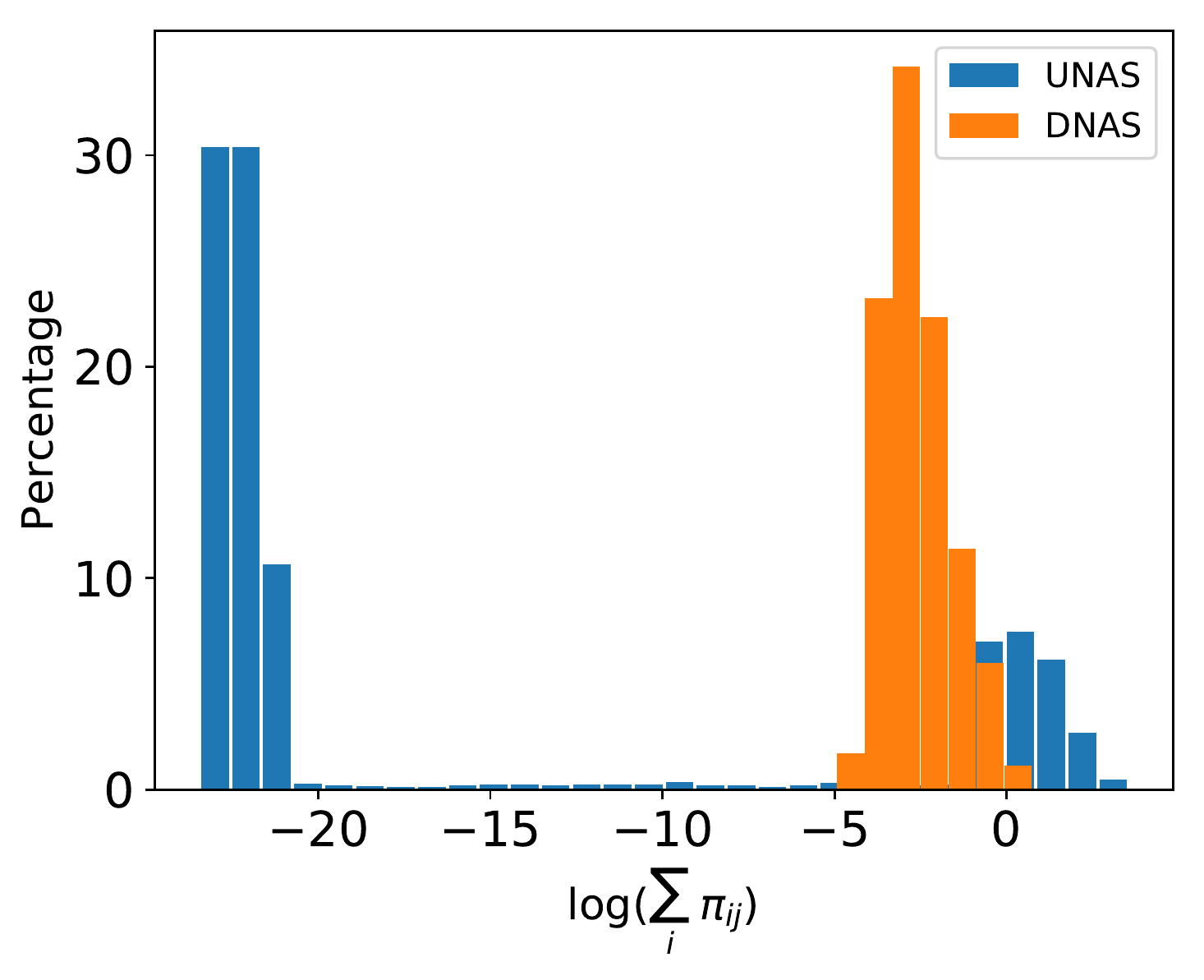}
\end{subfigure}\vspace{-10pt}
\caption{Comparison of the attention weights-sum between \UNA~and \DNA. Majority of the neurons in \UNA~are $\epsilon$-``explained away'', as the logarithm of their weights-sum is less than -20. }
\label{fig:hist_low_weights}
\end{figure}

\section{Hybrid Attention}
\label{sec:hybrid}
Since the formulations of \UNA~and \DNA~result in attention mechanisms with quite different properties, it is beneficial to combine them together.
A direct way to do so is by using trainable variables $u_l^h \in [0, 1]$ that control the contribution of the attention weights (for layer $l$ and head $h$) of the two normalization schemes (we use $u$ here to simplify the notation):
\begin{align}
    \pi_{ij} = u\; \pi^D_{ij} + (1-u) \pi^U_{ij}, \label{eq:hybrid}
\end{align}
where $\pi^D$ denotes the \DNA~weights and $\pi^U$ denotes the \UNA~weights.
We call this combination form the hybrid normalized attention scheme, \HNA.
\HNA~allows the model to learn, at different layers $l$ and different heads $h$, which of the two normalization schemes fits the data better, {\em for a given task.}
Each $u_l^h$ parameter is trained jointly with the other parameters to improve the representation power of the model and better fit the data.
Moreover, this approach also allows one to visualize how the values of the $u_l^h$ parameters change as the model is training, and therefore provides direct evidence of how much and where the different normalization schemes lead to better training performance.
We provide examples of such visualizations in the experiments.

\subsection{Computational Cost of \DNA~and \HNA}
The pseudo-code of the (multi-headed) \UNA, \DNA~and \HNA~is summarized in Algorithm \ref{alg:hnas}.
Note that for notational clarity, we wrote multi-head operations in a for-loop over different heads $h \in \cbr{1, \ldots, H}$.
However, an efficient implementation should use single tensor products across all heads, similar to the original Transformer method.
\begin{algorithm}
\caption{\textsc{\UNA, \DNA~and \HNA}}
\SetAlgoLined
\KwIn{Key, Query, Value transformation matrices $\Qb^h$, $\Kb^h$ and $\Vb^h$ for $H$ heads. Hybrid weights $u^h$ for all heads. Lower layer neurons $\xb$.}
\KwResult{Upper layer neurons $\yb$.}
\For{$h \in 1, \ldots, H$}{
  1. Compute $\qb_j^h = \Qb^h \xb_j$, $\kb_j^h = \Kb^h \xb_j$, $\vb_j^h = \Vb^h \xb_j$ for all lower neurons $j$ \\
  2. Compute $z^h_{ij} = \exp(\qb_i^{h\top} \kb_j^h)$\\
  3. [\UNA] Compute $\pi_{ij}^{h,U} = \frac{z_{ij}^h}{\sum_j z^h_{ij}}$\\
  4. [\DNA] Compute $\xi_{ij}^h = \frac{z_{ij}^h}{\sum_i z^h_{ij}}, \pi_{ij}^{h,D} = \frac{\xi_{ij}^h}{\sum_j \xi^h_{ij}}$ \\
  5. [\HNA] Compute $\pi_{ij}^{h} = u^h\; \pi^{h,D}_{ij} + (1-u^h) \pi^{h,U}_{ij}$\\
  6. Compute $\yb_i^h = \sum_j \pi_{ij}^{h} \vb_j^h$ \\
}
Return $\yb_i = \text{Concat}(\yb_i^h)$ for all $i$.
\label{alg:hnas}
\end{algorithm}

We can see that the additional computational cost of the \DNA~scheme compared to the original Transformer's \UNA~scheme is the two normalizations in Step-4 as opposed to one in Step-3. \HNA~requires both Step-3 and Step-4 and combines them together in Step-5.
The computational cost of the new steps is $O(S \times S \times H)$, where $S$ is the sequence length and $H$ is the number of heads.
In comparison, the cost of step 1 is $O(S \times D \times D)$, where $D$ is the size of the hidden representation.
In the majority of the applications we consider, we usually have $S \simeq D$ and $H \ll D$, and therefore the additional cost of the \DNA~and \HNA~scheme is usually small in practice.

The additional model variables introduced by the \HNA~scheme are the hybrid weights $u_l^h$.
Therefore, it adds $O(H\times L)$ new variables, where $L$ is the number of Transformer layers.
This increase is negligible compared to $O(D\times D\times L)$, the total size of the Transformer model.

\section{Numerical Experiments}
\label{sec:experiment}

%In this section, we contrast and numerically compare the original \UNA~scheme versus the proposed \HNA~scheme, for three end-to-end tasks: visual question answering, abstractive summarization and language representation learning.

\subsection{Multi-view Attention Model for VQA}
\label{sec:vqa}
%The goal of a visual question answering model is to provide an appropriate answer to a natural language question relevant to the contents of a given image.
In a vision-and-language multimodal system (e.g., Visual Question Answering), a crucial factor in the performance is the quality of the visual features.
A good example is the work of~\citep{yu19multimodal}, where they show that it is beneficial to use visual features produced by different image processing modules (multi-view).
They combine these visual features using an attention layer over the bounding-box features derived from multiple object detectors (Fig. \ref{fig:multi_view_vqa}).

\textbf{Experiment Setup.}
Our experimental setup is similar to the one proposed in \citep{yu19multimodal}.
We conduct experiments on the VQA benchmark dataset, VQA-v2 \citep{goyal2017making}.
%This benchmark contains a train set of 82,783 images and 443,757 questions, a validation set of 40,504 images and 214,354 questions, and test set consisting of 81,434 images and 447,793 questions.
Our core VQA model uses as a backbone the Pythia architecture~\citep{jiang2018pythia}.
%Aside from the backbone network, a crucial factor in the performance of a good VQA system is its visual feature extraction.
%All high-performing model use bounding-box visual features extracted by object-detector models trained on the Visual Genome Dataset~\citep{krishna2017visual}.
%Additionally, \cite{yu19multimodal} show that it is beneficial to use bounding-box features from multiple object detectors (multi-view).
We used three object detection models, where each detector generates $100$ bounding-box features.
All three object detection models are trained over the Visual Genome dataset~\citep{krishna2017visual}, but use different backbone networks: the first uses a ResNet-101 network~\citep{resnet2016}, the second a ResNet-200 network, and the third an Inception-ResNetV2 network~\citep{szegedy2016inceptionv4}.
\begin{figure}[hbt!]
\centering
\begin{subfigure}{.3\textwidth}
\centering
  \includegraphics[width=1.\linewidth]{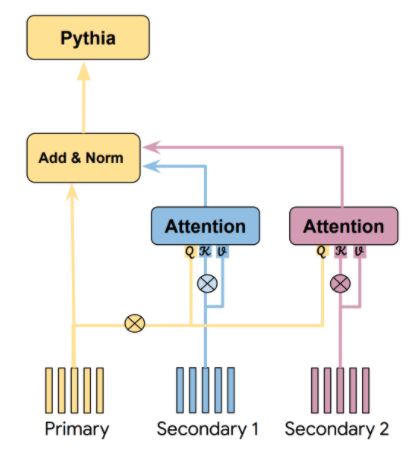}
\end{subfigure}\vspace{-10pt}
\caption{Multi-view attention model for VQA. }
\label{fig:multi_view_vqa}
\end{figure}

Multi-view features can be used in a straightforward manner by concatenating them all together before feeding them into the Pythia model; we call this approach the 3x100-boxes baseline.
The proposal from~\citep{yu19multimodal} combines the multi-view features using a one-layer attention model as follows:
one object-detector model is designated as primary, and its corresponding features are used as queries (after transformation);
the second and third object detection models are designated as secondary, and their corresponding features are used to obtain keys (see Figure \ref{fig:multi_view_vqa}).
The resulting output feature is a weighted sum of the features according to the attention weights.
More details about the mutliview attention model and the experiment hyperparameter settings are provided in the Appendix.
%With this attention model, the original $300$ bounding-box features from the three object-detection models are transformed into $100$ features, which are then fed into a Pythia model.
We use a single-layer and single-head attention model and experiment with two versions of the attention scheme: \UNA~and \DNA.

\begin{figure}[hbt!]
\centering
\begin{subfigure}{.35\textwidth}
\centering
  \includegraphics[width=1.\linewidth]{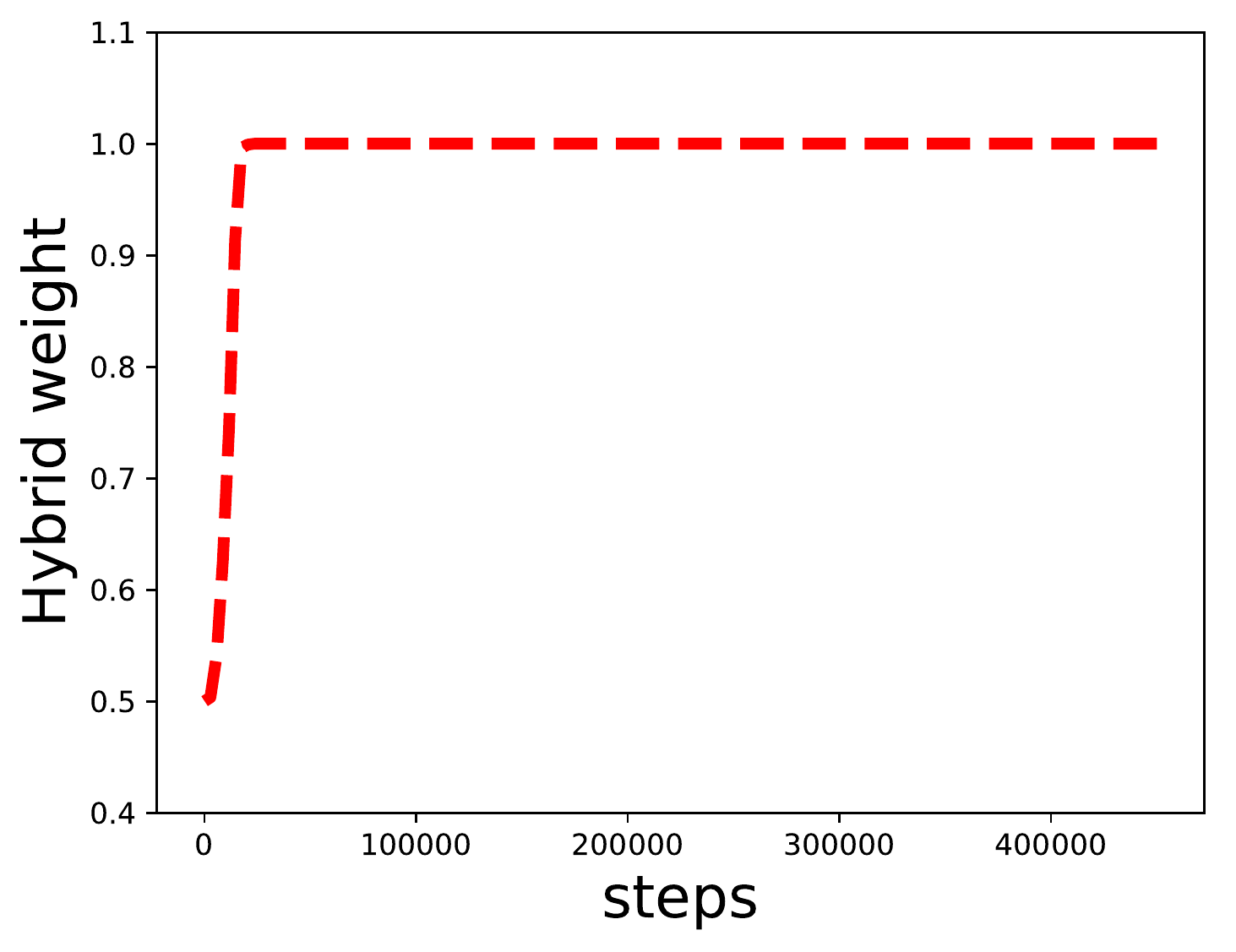}
\end{subfigure}
\caption{The hybrid weight heavily favors \DNA~over \UNA~in multi-view, attention-based VQA models. }
\label{fig:hybrid_weight_vqa}
\end{figure}

\begin{table*}[!htb]
  \begin{center}
    \begin{tabular}{ l |c c}
      Method &  Test-dev & Test-std \\ \hline
      10-100-boxes Pythia~\citep{jiang2018pythia} &   66.91 & - \\\hline
%      100-boxes Pythia~\citep{jiang2018pythia} &   68.31 & - \\\hline
%      100-boxes (no-attn baseline) &  68.33  & - \\
      3x100-boxes (no-attn baseline) &  68.79  & 69.22 \\
      3x100-boxes \UNA~(attn baseline) &    69.14  & 69.50 \\
      3x100-boxes \DNA &   {\bf 69.70}  & {\bf 70.01} \\ \hline
    \end{tabular}
  \end{center}
  \caption{Test Accuracy on VQA v2.0 Test-dev and Test-std splits.}
  \label{table:vqa}
\end{table*}

\textbf{Results Analysis.}
The results are summarized in Table \ref{table:vqa}.
%Confirming the findings from~\citep{yu19multimodal}, we see that using visual features from three object detectors improves performance over using the one from a single object detector (+0.46 on Test-dev).
Confirming the findings from~\citep{yu19multimodal}, using an attention mechanism (\UNA) over the 3x100 boxes improves the accuracy over the 3x100-boxes no-attn baseline, but the \DNA~mechanism achieves a better utilization of the signal provided by the three object detectors compared to the \UNA~mechanism. % (+0.56 on Test-dev and +0.51 on Test-std).
Moreover, \HNA~allows us to visually confirm the superiority of the \DNA~mechanism for the VQA task:
as we plot the hybrid weight $u$ from Eq.\eqref{eq:hybrid} in Fig.~\ref{fig:hybrid_weight_vqa}, it rapidly converges to 1.0, meaning that the model learns to heavily favor \DNA~over \UNA~for combining multi-view features.
Combining the findings in Fig.~\ref{fig:hist_low_weights}, we believe that \UNA~performs worse because it $\epsilon$-``explains-away'' too many box features in this stage, while \DNA~preserves information from all bounding boxes.

\subsection{Language Representation Learning}
The goal of language representation learning is to pretrain textual representations that are useful for solving natural language understanding (NLU) tasks like entailment or question answering.
\begin{figure}[hbt!]
\centering
\begin{subfigure}{.35\textwidth}
\centering
  \includegraphics[width=1.\linewidth]{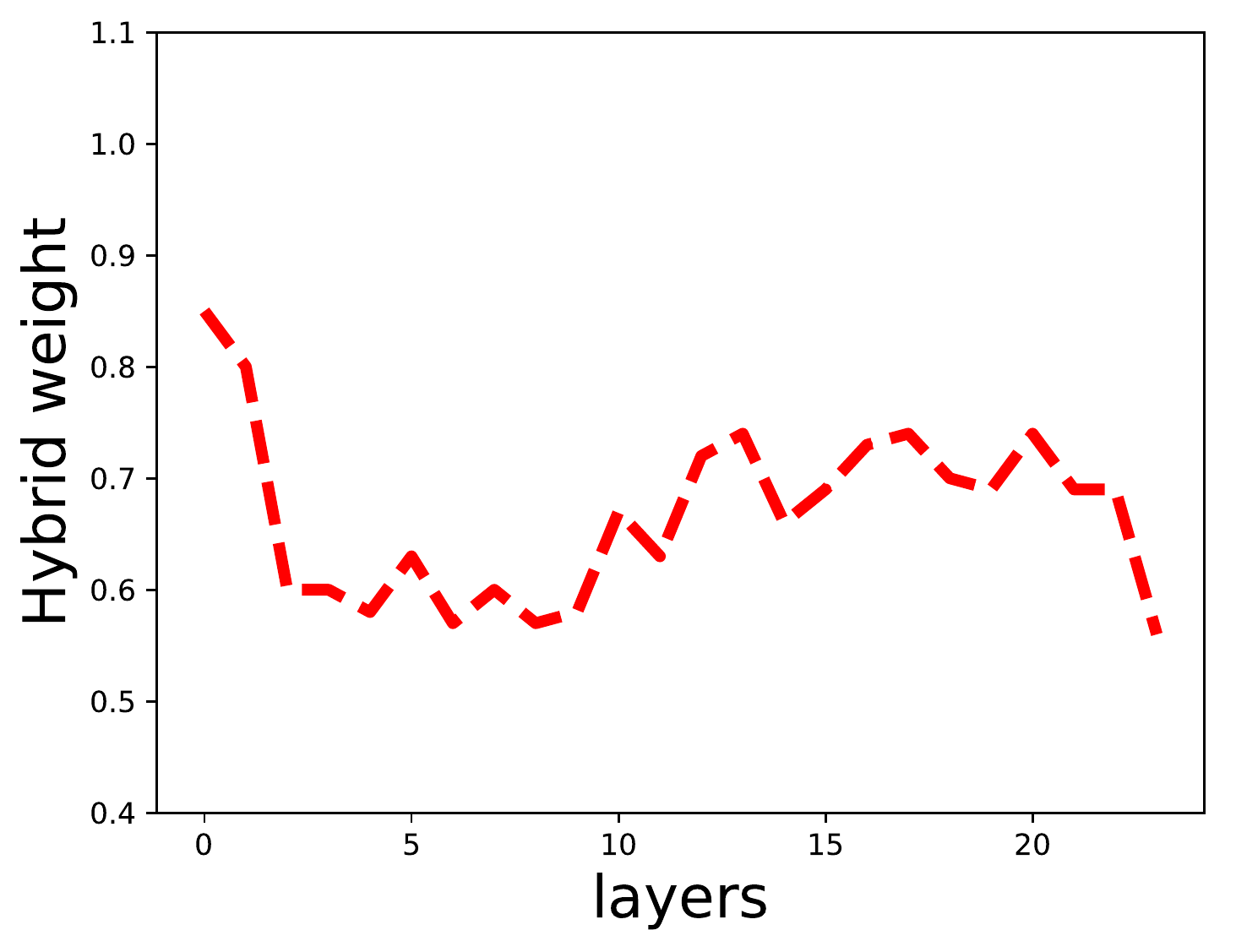}
\end{subfigure}
  \caption{In the BERT model, the hybrid weights favor \DNA~in all layers of the encoder ($u\geq .5$);
    \UNA~gains more weight for closer-to-output layers.}
\label{fig:hybrid_weight_bert}
\end{figure}

\textbf{Experiment Setup.} We use the BERT \citep{devlin2018bert} setting for our language representation learning setup:
a Transformer network with 24 layers of attention, the hidden and embedding size set to 1024, and 16 attention heads.

\begin{table*}[!htb]
  \begin{center}
    \begin{tabular}{ l | c c c c c}
      Method & \squad1.1 (EM/F1) & \squad2.0 (EM/F1)  & RACE & GLUE (avg.)\\ \hline
            \UNA~(baseline) & 85.1$\pm0.2$/92.2$\pm0.2$ & 80.2$\pm0.1$/83.6$\pm0.1$ & 74.2$\pm0.2$ & 84.5$\pm 0.3$ \\
            \DNA & {\bf 85.8}$\pm 0.1$/{\bf 92.4}$\pm 0.0$ & 81.0$\pm0.2$/84.2$\pm 0.2$ & {\bf 74.3$\pm0.3$} & {\bf 85.2$\pm 0.2$} \\
      \HNA & 85.6$\pm0.1$/92.2$\pm0.1$ & {\bf 81.7$\pm0.1$}/{\bf 84.8$\pm0.1$}  & {\bf 74.3$\pm0.2$} &  84.7$\pm 0.3$\\ \hline
    \end{tabular}
  \end{center}
  \caption{Pretraining with BERT models and finetuning on several representative downstream tasks.}
  \label{table:bert}
\end{table*}
\begin{table*}[!htb]
  \begin{center}
    \begin{tabular}{ l |c c c }
      Method & ROUGE-1 & ROUGE-2 & ROUGE-L \\ \hline
      %ABS+~\citep{rush2015neural}                 & 29.78          & 11.89          & 26.97 \\
      %Feat2s~\citep{nallapati2016abstractive}     & 32.67           & 15.59          & 30.64 \\
      %SEASS~\citep{zhou2017selective}             & 36.15           & 17.54          & 33.63 \\
      %Base+E2Tcnn+sd~\citep{amplayo2018entity}    & 37.04           & 16.66          & 34.93 \\
%      Transformer~\citep{goodman2019multistage} & 38.05          & 18.95          & 35.26 \\\hline
%      Transformer $\RandInit$~\citep{goodman2019multistage} & 38.05          & 18.95          & 35.26 \\\hline
%      Transformer $\BertInit$~\citep{goodman2019multistage} & 38.96          & 19.55          & 36.22 \\\hline
      \UNA-encoder, \UNA-decoder (baseline)  & 38.02$\pm$0.07 & 18.93$\pm$0.10 & 35.25$\pm$0.09 \\
      \DNA-encoder, \UNA-decoder & 38.19$\pm$0.05 & 19.09$\pm$0.07 & 35.52$\pm$0.06 \\
      \HNA-encoder, \UNA-decoder & {\bf 38.27$\pm0.12$} & {\bf 19.30$\pm0.07$} & {\bf 35.56$\pm0.09$} \\ \hline
    \end{tabular}
  \end{center}
  \caption{ROUGE F1 scores for headline generation on the Gigaword benchmark.}
  \label{table:headline}
\end{table*}

Our experiment is based on the ALBERT platform~\citep{lan2019albert}\footnote{\url{https://github.com/google-research/albert/}}.
We use the \textsc{BookCorpus}~\citep{zhu2015aligning} and English Wikipedia~\citep{devlin2018bert} to pretrain three contextual representation models, using \UNA, \DNA, and \HNA~respectively.
Each pretraining uses a batch size of 4096 and a LAMB optimizer with learning rate
0.00176 for 125k steps on the Cloud TPU V3 with 64 TPUs. We evaluate the resulting representations by using them as a starting point to finetune for a number of representative NLU tasks~\citep{rajpurkar-etal-2018-know,williams-etal-2018-broad}.
Due to space limitation, more experimental details are provided in the Appendix.

\textbf{Results Analysis.}
Each fine-tuning experiment is done 5 times, and the mean number and their standard error are reported.
The main results are summarized in Table \ref{table:bert} and more detailed results are available in the Appendix.
Overall, the network parameters encode their language representations by making use of \DNA, resulting in the empirical advantage of the \DNA~and \HNA~based models over the \UNA~based models on most tasks considered.
Aside from the numerical improvements when finetuning on the task, we also inspect what happens to the hybrid weight $u$ of Eq.\eqref{eq:hybrid} during \HNA~pretraining.
In Fig.~\ref{fig:hybrid_weight_bert}, we plot the hybrid weights (averaged over all heads of each layer) for all 24 layers and find that they are always larger than 0.5, meaning that the \DNA~method is preferred for pretraining (masked-LM \& sentence-ordering) tasks.
The \UNA~method has more weight for higher layers, meaning that ``explaining away'' is more allowable when it is closer to the output.

\subsection{Headline Generation}
\label{sec:summ}
We also present empirical results on a summarization task.
As already mentioned, summarization aligns well with the tendency of \UNA~of ``explaining away'' unimportant information.

\textbf{Experiment Setup.}
We use the Gigaword dataset~\citep{ldc-english-gigaword}, which is a standard benchmark for headline generation.
We pre-process this dataset as in~\citep{rush2015neural}, and further tokenize the words into word-pieces~\citep{devlin2018bert}, which results in a vocabulary size of 30,522 word-piece types.
We use a 10k dataset for validation, and the standard 2k test set~\citep{rush2015neural} as the evaluation test.

Our model and training hyperparameters are adapted from~\citep{goodman2019multistage}.
The transformer  contains 12 layers, each with a hidden size of 768 and 12 attention heads.
We keep the attention mechanism in the decoder as \UNA, and compare the \DNA~and \HNA~with \UNA~as the encoder attention mechanism.
%Since \DNA~may change the output of previously decoded positions, it is inappropriate for sequential decoding. Therefore, in encoder-decoder models, our decoder is always based on \UNA, while we compare \HNA~and \UNA~only on encoders.
Our training uses a batch size of 512 and an Adam optimizer~\citep{Kingma2015AdamAM} with learning rate of $2e^{-5}$ for 500k steps.
The training is done on Cloud TPU V3 with 16 TPUs for each job.
\begin{figure}[hbt!]
\centering
\begin{subfigure}{.35\textwidth}
\centering
  \includegraphics[width=1.\linewidth]{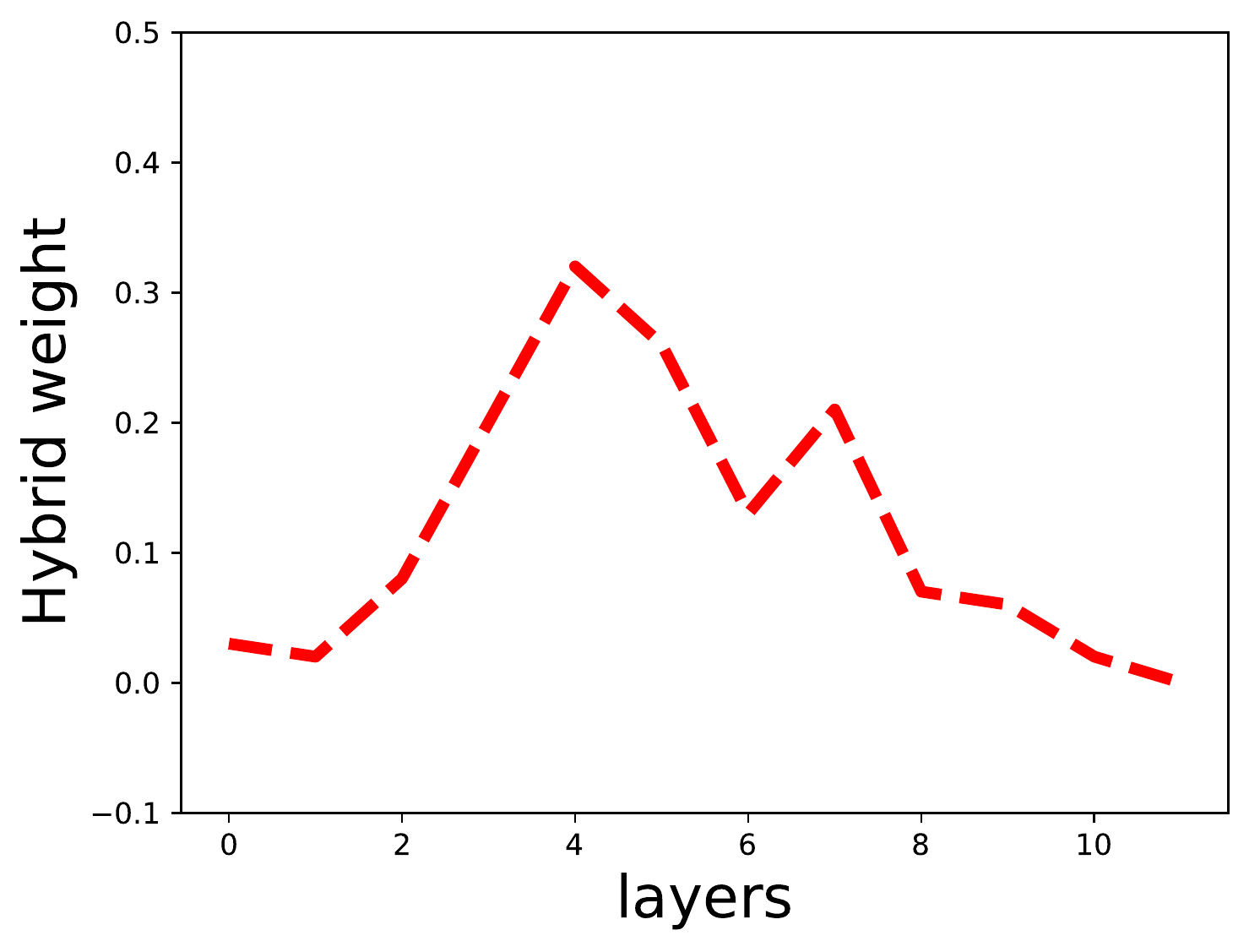}
\end{subfigure}
\caption{The hybrid weights favor \UNA~in the encoder of headline generation, because the task requires filtering unimportant information. However, the ROUGE scores of \DNA~is higher than \UNA.}
\label{fig:hybrid_weight_headline}
\end{figure}

\textbf{Results Analysis.} Each experiment is run 5 times, and the mean number and standard error are reported in Table \ref{table:headline}.
We also plot the averaged hybrid weights for all layers in Fig.~\ref{fig:hybrid_weight_headline} which shows that the \HNA~model favors \UNA, especially in the top and bottom layers of the encoder.
Nevertheless, \DNA~still makes a positive contribution in the middle layers, which allows the model based on \HNA~to perform better compared to the \UNA-based one. Somewhat surprisingly, \DNA~alone performs competitvely: all of its ROUGE scores are higher than the ones of \UNA~and are close to the ones of \HNA. This indicates that complete "explaining away" by \UNA~is unnecessary for filtering unimportant information. \DNA~provides a conservative alternative which achieves better generation performance.

\section{Conclusion}
The formulation of the attention mechanism of the Transformer, here called \UNA, leads to ``explaining away'' effects in which the information of certain input neurons is completely ignored.
Our new \DNA~scheme compensates for \UNA's weaknesses by avoiding ``explaining away'', as we show both theoretically and empirically.
Empirically, we show \DNA~and a hybrid \HNA~to be superior to the original attention mechanism, at the cost of minor computational overhead.

\bibliography{garcon}

\begin{thebibliography}{29}
\expandafter\ifx\csname natexlab\endcsname\relax\def\natexlab#1{#1}\fi

\bibitem[{Cheng et~al.(2019)Cheng, Fang, and Ostendorf}]{cheng2019dynamic}
Hao Cheng, Hao Fang, and Mari Ostendorf. 2019.
\newblock A dynamic speaker model for conversational interactions.
\newblock In \emph{NAACL-HLT}.

\bibitem[{Cohan et~al.(2018)Cohan, Dernoncourt, Kim, Bui, Kim, Chang, and
  Goharian}]{cohan2018discourseaware}
Arman Cohan, Franck Dernoncourt, Doo~Soon Kim, Trung Bui, Seokhwan Kim, Walter
  Chang, and Nazli Goharian. 2018.
\newblock A discourse-aware attention model for abstractive summarization of
  long documents.
\newblock In \emph{NAACL-HLT}.

\bibitem[{Dehghani et~al.(2019)Dehghani, Gouws, Vinyals, Uszkoreit, and
  Kaiser}]{dehghani2019universal}
Mostafa Dehghani, Stephan Gouws, Oriol Vinyals, Jakob Uszkoreit, and Lukasz
  Kaiser. 2019.
\newblock Universal transformers.
\newblock \emph{ArXiv}, abs/1807.03819.

\bibitem[{Devlin et~al.(2019)Devlin, Chang, Lee, and
  Toutanova}]{devlin2018bert}
Jacob Devlin, Ming-Wei Chang, Kenton Lee, and Kristina Toutanova. 2019.
\newblock \href {https://doi.org/10.18653/v1/N19-1423} {{BERT}: Pre-training of
  deep bidirectional transformers for language understanding}.
\newblock In \emph{Proceedings of the 2019 Conference of the North {A}merican
  Chapter of the Association for Computational Linguistics: Human Language
  Technologies, Volume 1 (Long and Short Papers)}, pages 4171--4186,
  Minneapolis, Minnesota. Association for Computational Linguistics.

\bibitem[{Goodman et~al.(2019)Goodman, Lan, and
  Soricut}]{goodman2019multistage}
Sebastian Goodman, Zhenzhong Lan, and Radu Soricut. 2019.
\newblock Multi-stage pretraining for abstractive summarization.
\newblock \emph{CoRR}.

\bibitem[{Goyal et~al.(2017)Goyal, Khot, Summers-Stay, Batra, and
  Parikh}]{goyal2017making}
Yash Goyal, Tejas Khot, Douglas Summers-Stay, Dhruv Batra, and Devi Parikh.
  2017.
\newblock Making the v in vqa matter: Elevating the role of image understanding
  in visual question answering.
\newblock In \emph{Proceedings of the IEEE Conference on Computer Vision and
  Pattern Recognition}, pages 6904--6913.

\bibitem[{Graff and Cieri(2003)}]{ldc-english-gigaword}
David Graff and Christopher Cieri. 2003.
\newblock {E}nglish {G}igaword {F}ifth {E}dition {LDC2003T05}.
\newblock In \emph{Linguistic Data Consortium}, Philadelphia.

\bibitem[{He et~al.(2016)He, Zhang, Ren, and Sun}]{resnet2016}
Kaiming He, Xiangyu Zhang, Shaoqing Ren, and Jian Sun. 2016.
\newblock Deep residual learning for image recognition.
\newblock In \emph{Proceedings of CVPR}.

\bibitem[{Hinton et~al.(2018)Hinton, Sabour, and Frosst}]{hinton18matrix}
Geoffrey Hinton, Sara Sabour, and Nicholas Frosst. 2018.
\newblock \href {https://openreview.net/pdf?id=HJWLfGWRb} {Matrix capsules with
  em routing}.

\bibitem[{Jiang et~al.(2018)Jiang, Natarajan, Chen, Rohrbach, Batra, and
  Parikh}]{jiang2018pythia}
Yu~Jiang, Vivek Natarajan, Xinlei Chen, Marcus Rohrbach, Dhruv Batra, and Devi
  Parikh. 2018.
\newblock Pythia v0. 1: the winning entry to the vqa challenge 2018.
\newblock \emph{arXiv preprint arXiv:1807.09956}.

\bibitem[{Kingma and Ba(2015)}]{Kingma2015AdamAM}
Diederik~P. Kingma and Jimmy Ba. 2015.
\newblock Adam: A method for stochastic optimization.
\newblock \emph{ICLR}.

\bibitem[{Krishna et~al.(2017)Krishna, Zhu, Groth, Johnson, Hata, Kravitz,
  Chen, Kalantidis, Li, Shamma et~al.}]{krishna2017visual}
Ranjay Krishna, Yuke Zhu, Oliver Groth, Justin Johnson, Kenji Hata, Joshua
  Kravitz, Stephanie Chen, Yannis Kalantidis, Li-Jia Li, David~A Shamma, et~al.
  2017.
\newblock Visual genome: Connecting language and vision using crowdsourced
  dense image annotations.
\newblock \emph{International Journal of Computer Vision}, 123(1):32--73.

\bibitem[{Lan et~al.(2019)Lan, Chen, Goodman, Gimpel, Sharma, and
  Soricut}]{lan2019albert}
Zhenzhong Lan, Mingda Chen, Sebastian Goodman, Kevin Gimpel, Piyush Sharma, and
  Radu Soricut. 2019.
\newblock Albert: A lite bert for self-supervised learning of language
  representations.
\newblock \emph{arXiv preprint arXiv:1909.11942}.

\bibitem[{Liu et~al.(2019)Liu, Ott, Goyal, Du, Joshi, Chen, Levy, Lewis,
  Zettlemoyer, and Stoyanov}]{liu2019roberta}
Yinhan Liu, Myle Ott, Naman Goyal, Jingfei Du, Mandar Joshi, Danqi Chen, Omer
  Levy, Mike Lewis, Luke Zettlemoyer, and Veselin Stoyanov. 2019.
\newblock {RoBERTa}: A robustly optimized {BERT} pretraining approach.
\newblock \emph{arXiv preprint arXiv:1907.11692}.

\bibitem[{Mazar{\'e} et~al.(2018)Mazar{\'e}, Humeau, Raison, and
  Bordes}]{mazare2018training}
Pierre-Emmanuel Mazar{\'e}, Samuel Humeau, Martin Raison, and Antoine Bordes.
  2018.
\newblock Training millions of personalized dialogue agents.
\newblock In \emph{EMNLP}.

\bibitem[{Parmar et~al.(2018)Parmar, Vaswani, Uszkoreit, Kaiser, Shazeer, Ku,
  and Tran}]{parmar2018image}
Niki Parmar, Ashish Vaswani, Jakob Uszkoreit, {\L}ukasz Kaiser, Noam Shazeer,
  Alexander Ku, and Dustin Tran. 2018.
\newblock Image transformer.
\newblock \emph{arXiv preprint arXiv:1802.05751}.

\bibitem[{Peyré and Cuturi(2019)}]{peyre19computational}
Gabriel Peyré and Marco Cuturi. 2019.
\newblock Computational optimal transport.
\newblock \emph{Foundations and Trends in Machine Learning}, 11(5-6):355--607.

\bibitem[{Rajpurkar et~al.(2018)Rajpurkar, Jia, and
  Liang}]{rajpurkar-etal-2018-know}
Pranav Rajpurkar, Robin Jia, and Percy Liang. 2018.
\newblock \href {https://doi.org/10.18653/v1/P18-2124} {Know what you don{'}t
  know: Unanswerable questions for {SQ}u{AD}}.
\newblock In \emph{Proceedings of the 56th Annual Meeting of the Association
  for Computational Linguistics (Volume 2: Short Papers)}, pages 784--789,
  Melbourne, Australia. Association for Computational Linguistics.

\bibitem[{Rush et~al.(2015)Rush, Chopra, and Weston}]{rush2015neural}
Alexander~M. Rush, Sumit Chopra, and Jason Weston. 2015.
\newblock A neural attention model for abstractive sentence summarization.
\newblock In \emph{Proceedings of EMNLP}, pages 379--389.

\bibitem[{Sharma et~al.(2018)Sharma, Ding, Goodman, and
  Soricut}]{sharma2018conceptual}
Piyush Sharma, Nan Ding, Sebastian Goodman, and Radu Soricut. 2018.
\newblock {Conceptual Captions}: A cleaned, hypernymed, image alt-text dataset
  for automatic image captioning.
\newblock In \emph{Proceedings of ACL}.

\bibitem[{Szegedy et~al.(2016)Szegedy, Ioffe, and
  Vanhoucke}]{szegedy2016inceptionv4}
Christian Szegedy, Sergey Ioffe, and Vincent Vanhoucke. 2016.
\newblock Inception-v4, inception-resnet and the impact of residual connections
  on learning.
\newblock \emph{CoRR}, abs/1602.07261.

\bibitem[{Tan and Bansal(2019)}]{tan2019lxmert}
Hao~Hao Tan and Mohit Bansal. 2019.
\newblock {LXMERT}: Learning cross-modality encoder representations from
  transformers.
\newblock \emph{ArXiv}, abs/1908.07490.

\bibitem[{Vaswani et~al.(2017)Vaswani, Shazeer, Parmar, Uszkoreit, Jones,
  Gomez, Kaiser, and Polosukhin}]{vaswani2017attention}
Ashish Vaswani, Noam Shazeer, Niki Parmar, Jakob Uszkoreit, Llion Jones,
  Aidan~N. Gomez, Lukasz Kaiser, and Illia Polosukhin. 2017.
\newblock Attention is all you need.
\newblock In \emph{Proceedings of NeurIPS}.

\bibitem[{Williams et~al.(2018)Williams, Nangia, and
  Bowman}]{williams-etal-2018-broad}
Adina Williams, Nikita Nangia, and Samuel Bowman. 2018.
\newblock \href {https://doi.org/10.18653/v1/N18-1101} {A broad-coverage
  challenge corpus for sentence understanding through inference}.
\newblock In \emph{Proceedings of the 2018 Conference of the North {A}merican
  Chapter of the Association for Computational Linguistics: Human Language
  Technologies, Volume 1 (Long Papers)}, pages 1112--1122, New Orleans,
  Louisiana. Association for Computational Linguistics.

\bibitem[{Yang et~al.(2019)Yang, Dai, Yang, Carbonell, Salakhutdinov, and
  Le}]{yang2019xlnet}
Zhilin Yang, Zihang Dai, Yiming Yang, Jaime Carbonell, Ruslan Salakhutdinov,
  and Quoc~V Le. 2019.
\newblock {XLNet}: Generalized autoregressive pretraining for language
  understanding.
\newblock \emph{arXiv preprint arXiv:1906.08237}.

\bibitem[{Yu et~al.(2019{\natexlab{a}})Yu, Li, Yu, and Huang}]{yu19multimodal}
Jun Yu, Jing Li, Zhou Yu, and Qingming Huang. 2019{\natexlab{a}}.
\newblock Multimodal transformer with multi-view visual representation for
  image captioning.

\bibitem[{Yu et~al.(2019{\natexlab{b}})Yu, Yu, Cui, Tao, and Tian}]{yu2019deep}
Zhou Yu, Jun Yu, Yuhao Cui, Dacheng Tao, and Qi~Tian. 2019{\natexlab{b}}.
\newblock Deep modular co-attention networks for visual question answering.
\newblock In \emph{CVPR}.

\bibitem[{Zhao et~al.(2019)Zhao, Sharma, Levinboim, and
  Soricut}]{zhao2019informative}
Sanqiang Zhao, Piyush Sharma, Tomer Levinboim, and Radu Soricut. 2019.
\newblock Informative image captioning with external sources of information.
\newblock In \emph{ACL}.

\bibitem[{Zhu et~al.(2015)Zhu, Kiros, Zemel, Salakhutdinov, Urtasun, Torralba,
  and Fidler}]{zhu2015aligning}
Yukun Zhu, Ryan Kiros, Richard~S. Zemel, Ruslan Salakhutdinov, Raquel Urtasun,
  Antonio Torralba, and Sanja Fidler. 2015.
\newblock Aligning books and movies: Towards story-like visual explanations by
  watching movies and reading books.
\newblock In \emph{ICCV}.

\end{thebibliography}
\bibliographystyle{acl_natbib}

\newpage
\clearpage

\appendix

\section{Multi-head attention and GMM}
In multi-head attention, the lower neurons $\xb_j$ are projected into $H$ heads with different $\qb_j^h = \Qb^h \xb_j$ and $\kb_j^h = \Kb^h \xb_j$ where $\Qb^h$ and $\Kb^h$ are transformation matrices of size $\frac{D}{H} \times D$.
This yields $H$ outputs {$\yb^h_i$},
\begin{align*}
  \yb^h_i &= \sum_j \frac{\exp(\qb_i^{h\top} \kb_j^h)}{\sum_j \exp(\qb_i^{h\top} \kb_j^h)} \Vb^h \xb_j,
\end{align*}
where $\Vb^h$ is the value transformation matrix of size $\frac{D}{H} \times D$.

If we follow the same idea as in single-head attention, the corresponding GMM becomes,
\begin{align*}
  \qb_i^{h,new} =& \sum_j \frac{\exp(\qb_i^{h\top} \kb_j^h)}{\sum_j \exp(\qb_i^{h\top} \kb_j^h)} \Kb^h \xb_j
\end{align*}
In order to convert $\qb_i^{h,new}$ to $\yb^h_i$, one difficulty is that $\Kb^h$ is a down-projection matrix. Therefore, the inversion of $\Kb^{h \top} \Kb^h$ does not exist. In order to avoid the problem, one can use the same key transformation for all heads $\bar{\Kb}^h = \Kb$ which is $D \times D$. The query transformation $\bar{\Qb}^h$ is a zero padded matrix also of size $D \times D$. The rows of $\bar{\Qb}^h$ are all zero, except of the rows
\begin{align*}
\bar{\Qb}^h\sbr{\rbr{\frac{hD}{H}:\frac{(h+1)D}{H}}, :} = \Qb^h.
\end{align*}
One can show that, if $\bar{\qb}_j^h = \bar{\Qb}^h \xb_j$ and $\bar{\kb}_j^h = \bar{\Kb}^h \xb_j$, then
\begin{align*}
\bar{\qb}_i^{h\top} \bar{\kb}_j^h = \qb_i^{h\top} \kb_j^h.
\end{align*}
Therefore, the corresponding GMM becomes
\begin{align*}
  \bar{\qb}_i^{h,new} =& \sum_j \frac{\exp(\bar{\qb}_i^{h\top} \bar{\kb}_j^h)}{\sum_j \exp(\bar{\qb}_i^{h\top} \bar{\kb}_j^h)} \bar{\Kb}^h \xb_j
\end{align*}
and can be related to $\yb^h_i$ by $\yb^h_i = \Vb^h \Kb^{-1} \bar{\qb}_i^{h,new}$.

\section{Experiment Details about the Multi-view Attention Model for VQA}
\paragraph{Dataset and evaluation} The VQA-v2 \cite{goyal2017making} dataset contains a training set (with 80k images and 444k QA pairs), a validation set (with 40k images and 214k QA pairs), and test set (with 80k images and 448k QA pairs). For each question, there are $10$ answers provided by $10$ different human annotators. Following the same setting as Pythia~\cite{jiang2018pythia}, we augment the train set with a part of validation set (train + val2train) and use the remaining data in validation set for validation (minival). The test set is split into test-dev and test-std, and the evaluation can only be conducted online. Same as other work on VQA, we report a robust accuracy metric
as the average score over $9$ subsets of the groundtruth $10$ answers, where each score is computed as follows:
$$\text{Acc}(ans) = \min \{{(\# \text{human that said } ans)}/{3},1\}.$$

\paragraph{Detailed Model Descriptions}

Our VQA model uses as a backbone the Pythia architecture~\citep{jiang2018pythia}. In order to combine the 100 features from each of the three object detection models, we use a one-layer attention mechanism as in \citep{yu19multimodal}. The features from one object-detector model is used as the primary feature. The features of the second and third object detection models are designated as secondary features. In order to obtain keys and queries, we apply transformation on the secondary and primary features, so that $\kb_i^{S1} = \Kb^{S1} \xb_i^{S1}$, $\kb_i^{S2} = \Kb^{S2} \xb_i^{S2}$, $\qb_i = \Qb \xb_i^{P}$. However, we find that it is better to directly use the features as the values without transformation. For the primary view, the output value of the $i$-th feature is
\begin{align*}
    \yb_i^{P} = \xb_i^{P}.
\end{align*}
For each secondary view, the feature is computed as
\begin{align*}
    \yb_i^{S1} &= \sum_j \pi_{ij}^{S1} \xb_j^{S1}
\end{align*}
For the \UNA~scheme,
\begin{align*}
\pi_{ij}^{S1} &= \frac{\exp(\qb_i^{\top} \kb_j^{S1})}{\sum_j \exp(\qb_i^{\top} \kb_j^{S1})}.
\end{align*}
For the \HNA~schme
\begin{align*}
\pi_{ij}^{S1, U} &= \frac{\exp(\qb_i^{\top} \kb_j^{S1})}{\sum_j \exp(\qb_i^{\top} \kb_j^{S1})}\\
\xi_{ij}^{S1} &= \frac{\exp(\qb_i^{\top} \kb_j^{S1})}{\sum_i \exp(\qb_i^{\top} \kb_j^{S1})}, \;\;\;\;
\pi_{ij}^{S1, D} = \frac{\xi_{ij}^{S1}}{\sum_j \xi_{ij}^{S1}} \\
\pi_{ij}^{S1} &= u \pi_{ij}^{S1, D} + (1 - u) \pi_{ij}^{S1, U}.
\end{align*}
The final output feature integrates the 100 features from different views via an element-wise summation, followed by layer normalization,
\begin{align*}
 \yb_i = normalize(\yb_i^P + \yb_i^{S1} + \yb_i^{S2})
\end{align*}

\paragraph{Hyperparameters} During the hyperparameter tuning process, we train on training set only and manually tune our hyperparameter based on the accuracy on the validation set. We use the same model hyperparameters as the Pythia model. Our image feature dimension is $2048$ and the query and key transformation matrices are of size $2048 \times 2048$. For the attention layer, we experiment with multiple number of heads including 1, 2, 4, and 8, and we find the single head attention gives the best performance. We also did a grid search on the dropout probability in attention layer from $0.05$ to $0.4$, and set it to $0.1$ after the search. The hybrid attention weight is initalized to be 0.5. For optimization, we use Adam optimizer with learning rate $10^{-4}$, and use batch size $192$. We train the model for $500,000$ steps. The training was done on 4 Cloud TPUs. The total training time is approximately 38 hours for each model. The validation performance on the minival dataset is reported in Table \ref{table:vqa_val}.

\begin{table*}[!htb]
  \begin{center}
    \begin{tabular}{ l |c }
      Method &  minival \\ \hline
      3x100-boxes (no-attn baseline) &  68.26  \\
      3x100-boxes \UNA~(attn baseline) &    68.34   \\
      3x100-boxes \HNA &   {\bf 68.99}   \\ \hline
    \end{tabular}
  \end{center}
  \caption{Validation accuracy on the VQA v2.0 minival splits.}
  \label{table:vqa_val}
\end{table*}

\section{Experiment Details about Language Representation Learning}
\subsection{Downstream Evaluation Tasks}
\label{downstream_detailed_description}
\paragraph{\squad} \squad is an extractive question answering dataset built from Wikipedia. The answers are segments from the context paragraphs and the task is to predict answer spans. We evaluate our models on two versions of SQuAD: v1.1 and v2.0. \squad v1.1 has 100,000 human-annotated question/answer pairs. \squad v2.0 additionally introduced 50,000 unanswerable questions. For \squad v1.1, we use the same training procedure as BERT, whereas for \squad v2.0, models are jointly trained with a span extraction loss and an additional classifier for predicting answerability~\citep{yang2019xlnet,liu2019roberta}. We report the results on the development set.

\paragraph{RACE} RACE is a large-scale dataset for multi-choice reading comprehension, collected from English examinations in China with nearly 100,000 questions. Each instance in RACE has 4 candidate answers. Following prior work~\citep{yang2019xlnet,liu2019roberta}, we use the concatenation of the passage, question, and each candidate answer as the input to models. Then, we use the representations from the ``[CLS]'' token for predicting the probability of each answer. The dataset consists of two domains: middle school and high school. We train our models on both domains and report accuracies on the development set.

\paragraph{GLUE} GLUE~\citep{williams-etal-2018-broad} is comprised of 9 tasks, namely Corpus of Linguistic Acceptability (CoLA), Stanford Sentiment Treebank (SST), Microsoft Research Paraphrase Corpus
(MRPC), Semantic Textual Similarity Benchmark (STS),
Quora Question Pairs (QQP), Multi-Genre NLI (MNLI), Question NLI (QNLI), Recognizing Textual
Entailment (RTE) and
Winograd NLI (WNLI). It focuses on evaluating model capabilities for natural language understanding. The detailed per-task results on GLUE are available in Table \ref{table:glue}.

\subsection{Model hyperparameters}
Our pretraining uses the same default hyperparameters as in \url{https://github.com/google-research/albert/blob/master/run_pretraining.py}.
The total number of model parameters of the BERT model is about 334M. The total pretraining time for \UNA~is about 40 hours per job, while for \DNA~and \HNA~are around 48 hours. There is about 20\% overhead which is much higher than our theoretical estimation. This is because our BERT pretraining used 64 TPUs that are highly efficient for parallelizing large matmul ops. As a result, the runtime of two consecutive normalization steps of smaller tensors could be longer than a single-step matmul of a much larger tensor. We expect the relative overhead to be smaller with other types of processing units.

Hyperparameters for downstream tasks are shown in Table~\ref{tab:downstream-hyperparameter}.
These hyperparameters were copied from \cite{lan2019albert} which were adapted from \cite{liu2019roberta},  \cite{devlin2018bert}, and \cite{yang2019xlnet}.
We used the ADAM optimizer for fine-tuning as in \cite{lan2019albert}.
\begin{table*}[!htbp]
    \small
    \centering
\begin{tabular}{c|ccccccc}
& LR & BSZ & BERT DR & Classifier DR & TS & WS & MSL \\\hline
SQuAD v1.1 & 5.00E-05 & 48 & 0 & 0.1 & 3649 & 365 & 384 \\
SQuAD v2.0 & 3.00E-05 & 48 & 0 & 0.1 & 8144 & 814 & 512 \\
RACE & 1.00E-05 & 32 & 0 & 0.1 & 12000 & 1000 & 512 \\
CoLA & 1.00E-05 & 16 & 0 & 0.1 & 5336 & 320 & 512 \\
STS & 2.00E-05 & 16 & 0 & 0.1 & 3598 & 214 & 512 \\
SST-2 & 1.00E-05 & 32 & 0 & 0.1 & 20935 & 1256 & 512 \\
MNLI & 3.00E-05 & 128 & 0 & 0.1 & 10000 & 1000 & 512 \\
QNLI & 1.00E-05 & 32 & 0 & 0.1 & 33112 & 1986 & 512 \\
QQP & 5.00E-05 & 128 & 0.1 & 0.1 & 14000 & 1000 & 512 \\
RTE & 3.00E-05 & 32 & 0.1 & 0.1 & 800 & 200 & 512 \\
MRPC & 2.00E-05 & 32 & 0 & 0.1 & 800 & 200 & 512 \\
WNLI & 2.00E-05 & 16 & 0.1 & 0.1 & 2000 & 250 & 512 \\
\end{tabular}
    \caption{Hyperparameters for language representation learning downstream tasks. LR: Learning Rate. BSZ: Batch Size. DR: Dropout Rate. TS: Training Steps. WS: Warmup Steps. MSL: Maximum Sequence Length.}
    \label{tab:downstream-hyperparameter}
\end{table*}

\begin{table*}[!htb]
  \begin{center}
    \begin{tabular}{ l | c c c c c c c c | c}
      Method & MNLI & SST-2 & CoLA & QNLI & QQP	& RTE & STS-B & MRPC & Avg\\ \hline
      \UNA & 85.5$\pm$.3 & 93.1$\pm$.2 & 60.7$\pm$.6 & 91.1$\pm$.1 & 89.4$\pm$.8 & 76.2$\pm$.5 & 91.1$\pm$.1 & 88.7$\pm$.1 & 84.5$\pm$.3 \\
      \DNA & 86.4$\pm$.1 & 93.1$\pm$.1 & 59.9$\pm$.7 & 91.5$\pm$.1 & 91.2$\pm$.1 & 80.3$\pm$.6 & 91.1$\pm$.1 & 87.7$\pm$.2 & 85.2$\pm$.2 \\
      \HNA & 86.2$\pm$.1 & 93.2$\pm$.1 & 59.4$\pm$.5 & 91.4$\pm$.1 & 91.1$\pm$.1 & 77.8$\pm$1.0 & 90.8$\pm$.1 & 87.6$\pm$.3 & 84.7$\pm$.3 \\
    \end{tabular}
  \end{center}
  \caption{Detailed results of \HNA~and \DNA~on GLUE downstream tasks.}
  \label{table:glue}
\end{table*}

\section{Experimental Details about Headline Generation}
The Gigaword dataset~\citep{ldc-english-gigaword} consists of about 4M $\langle\mathit{article}, \mathit{headline}\rangle$ pairs.
We pre-process this dataset as in~\citep{rush2015neural}, which results in an average $\mathit{article}$ length of 31.4 words, and an average $\mathit{headline}$ length of 8.5 words.
We further tokenize the words into word-pieces~\citep{devlin2018bert}, which results in a vocabulary size of 30,522 word-piece types.
We use a 10k dataset for validation, and the standard 2k test set~\citep{rush2015neural} as the evaluation test.

Our backbone Transformer model is adapted from~\citep{goodman2019multistage} that contains 12 layers, each with a hidden size of 768 and 12 attention heads. The total number of model parameters is about 108M.
We truncate (or pad) the input and output sequences to a fixed number of word-piece positions, namely 128 encoder positions and 64 decoder positions, to accommodate hardware and model-architecture limitations.
The hybrid attention weight is initalized to be 0.1, because the headline generation task favors \UNA~to "explain away" unimportant neurons.
We use an Adam optimizer~\citep{Kingma2015AdamAM} and a learning rate of $2e^{-5}$ for 500 steps.
The training was done on Cloud TPU V3 with 16 TPUs for each job.
The total training time is approximately 16.5 hours for \DNA/\HNA~and 16 hours for \UNA.

The ROUGE-L score on the validation set is 45.75 for \HNA, and 45.63 for \UNA.

\section{Doubly-normalized Attention Alleviates Mode Collapse}
Attention model tends to collapse modes. In particular, the data at different positions tend to move closer to each other after attention. We illustrate the collapsing effect in a 2-D example in Fig.~\ref{fig:mode_collapse_2d_unbalance}, where two separated clusters of data converge to a single point after only 4 steps of \UNA~(left). Most multi-layer attention models such as the Transformer try to avoid such collapsing effect by adding a residual layer, which pulls the data back to its original position.

\begin{figure}[hbt!]
\begin{subfigure}{.23\textwidth}
  \centering
  \includegraphics[width=1\linewidth]{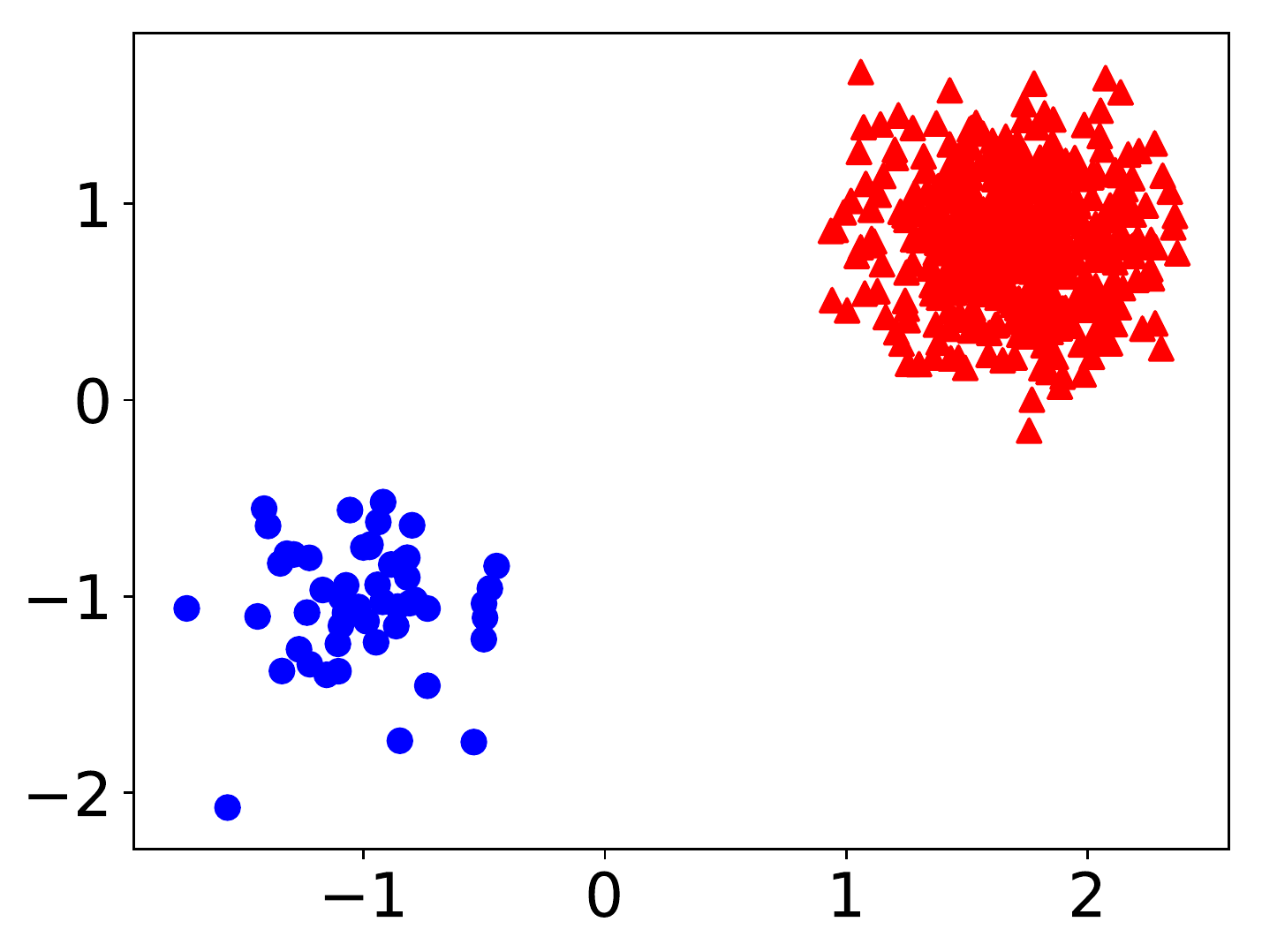}
  \label{fig:sfig1}\vspace{-5mm}
  \caption{\UNA, step 0}
\end{subfigure}%
\begin{subfigure}{.23\textwidth}
  \centering
  \includegraphics[width=1\linewidth]{figs/ub_data.pdf}
  \label{fig:sfig1}\vspace{-5mm}
  \caption{\DNA, step 0}
\end{subfigure}%

\begin{subfigure}{.23\textwidth}
  \centering
  \includegraphics[width=1\linewidth]{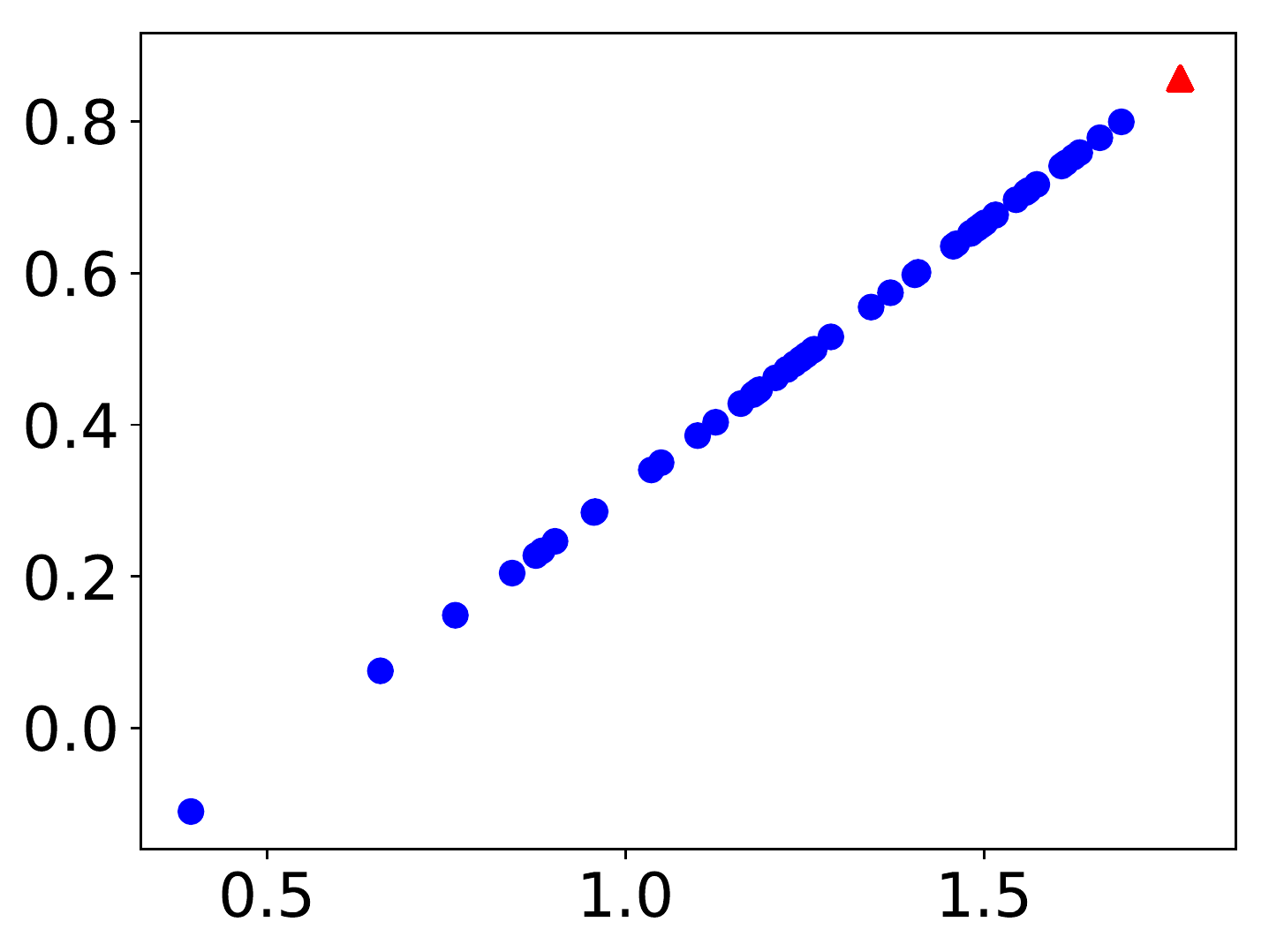}
  \label{fig:sfig2}\vspace{-5mm}
  \caption{\UNA, step 2}
\end{subfigure}
\begin{subfigure}{.23\textwidth}
  \centering
  \includegraphics[width=1\linewidth]{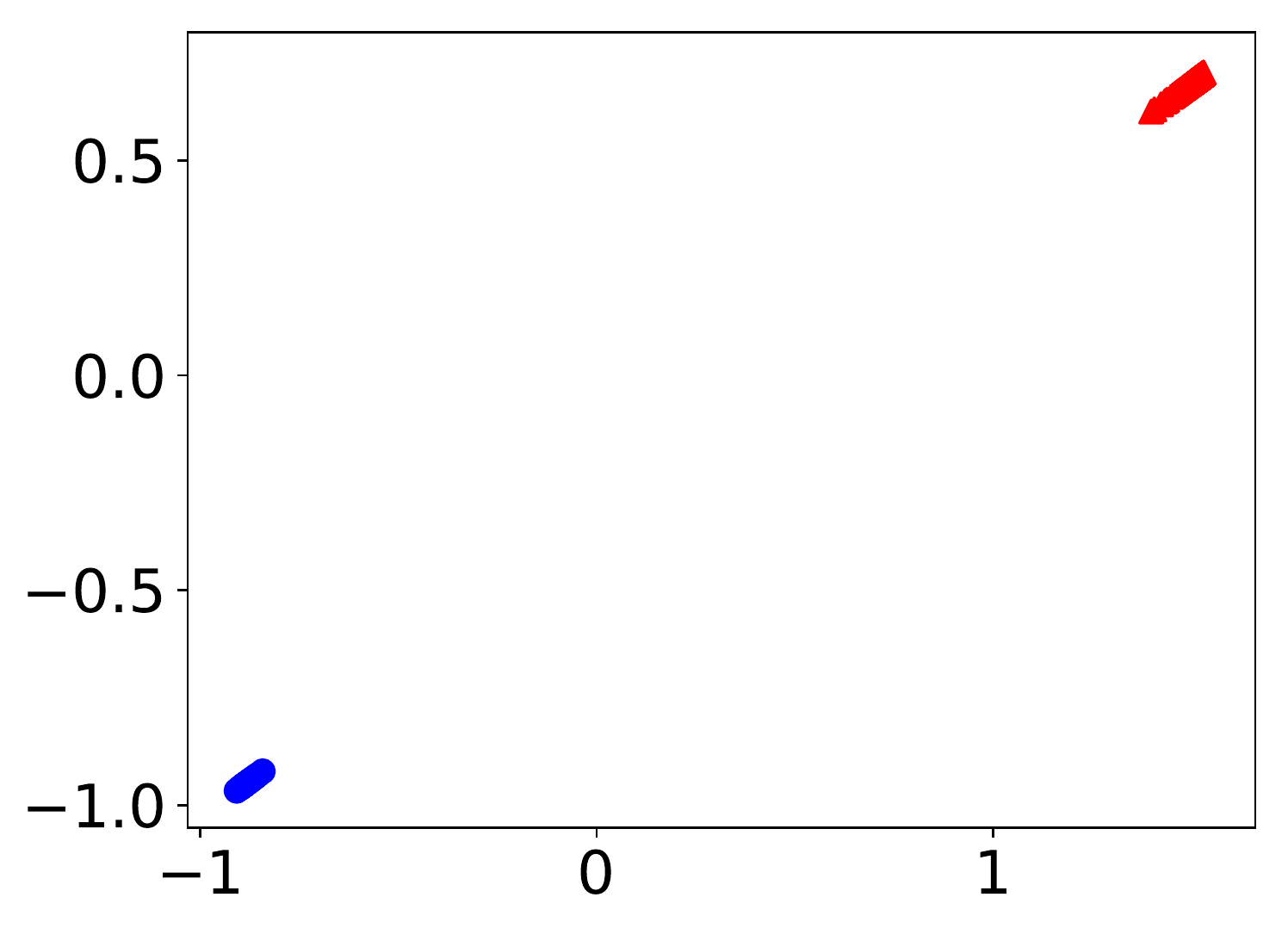}
  \label{fig:sfig2}\vspace{-5mm}
  \caption{\DNA, step 2}
\end{subfigure}

\begin{subfigure}{.23\textwidth}
  \centering
  \includegraphics[width=1\linewidth]{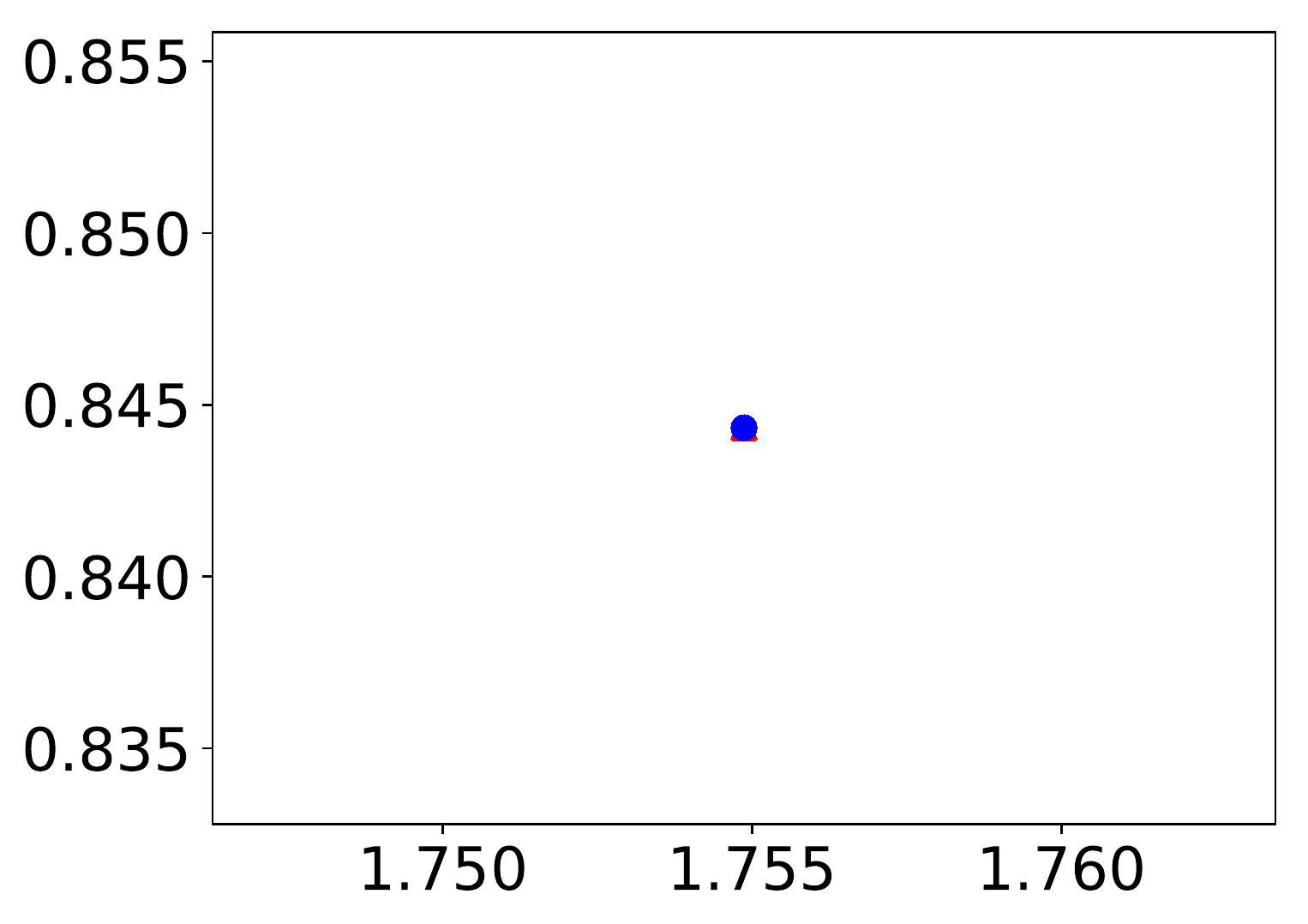}
  \label{fig:sfig1}\vspace{-5mm}
  \caption{\UNA, step 4}
\end{subfigure}%
\begin{subfigure}{.23\textwidth}
  \centering
  \includegraphics[width=1\linewidth]{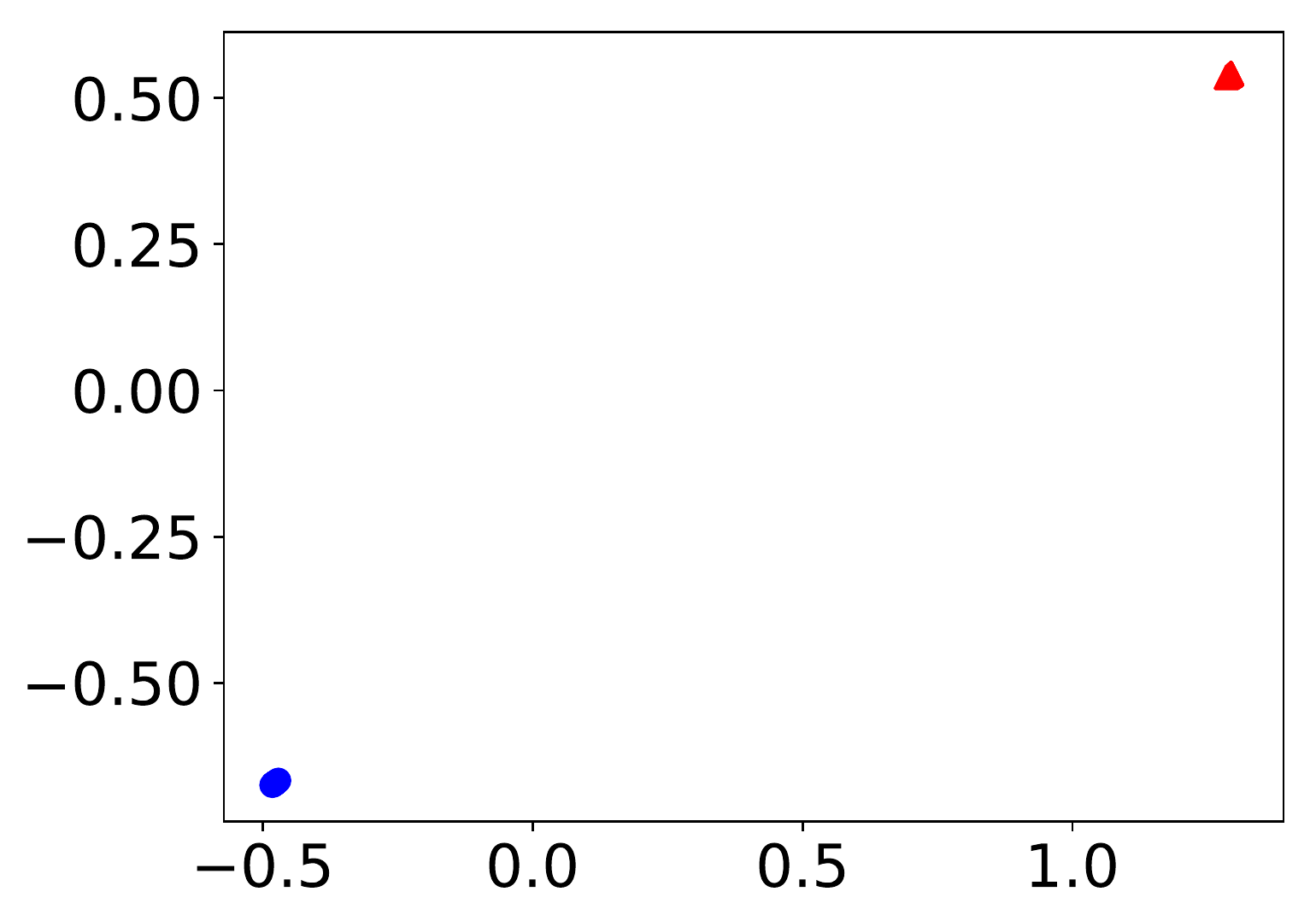}
  \label{fig:sfig1}\vspace{-5mm}
  \caption{\DNA, step 4}
\end{subfigure}%
\vspace{-2mm}
\caption{Mode-collapsing behavior on a mixture of two Gaussians. 500 data points (red) are centered at [1.8, 0.7] and the other 50 data points (blue) are centered at [-1, -1]. Both Gaussians have covariance matrix equal to $0.1 \Ib$. Four steps of self-attention are applied on the data points. In each step, Eq.\eqref{eq:lower_attention} is applied by \UNA~and Eq.\eqref{eq:upper_xi} and \eqref{eq:doubly_attention} are applied by \DNA, and we let $\Qb=\Kb=\Ib$ in both cases. After four steps, \UNA~(left) collapses to 1 cluster, while \DNA~(right) maintains 2 clusters.}
\label{fig:mode_collapse_2d_unbalance}
\end{figure}

To compare the mode-collapsing effect of \UNA~and \DNA~analytically, we study a 1-D toy example which contains two clusters. One cluster contains $N_0$ data points centered at value $a$, and the other contains $N_1$ data points centered at value $-a$.
The distance between the two centers is $2a$.
Assuming the relative distance between the data points within each set is negligible compared to $2a$, the unnormalized attention weights between one center and the data from the other set is $s = \exp(-(2a)^2/2) = \exp(-2a^2)$,
and the weights between one center and the data within that set is
$t = \exp(0)=1$\footnote{The attention weights are computed with a Gaussian.
But the same result holds with dot product attention, where the inter-attention weight is $s = \exp(\inner{-a}{a})=\exp(-a^2)$ and the intra-attention weight is $t = \exp(\inner{a}{a})=\exp(a^2)$.
The ratio $s/t = \exp(-2a^2)$ is identical to the Gaussian case.}. We compare the center distance between the two data clusters after applying the \UNA~and \DNA~self-attention updates.

Applying Eq.~\eqref{eq:lower_attention} for the \UNA~scheme, the new center distance of the upper-normalized attention scheme are:
\begin{align*}
    c_0^U &= \rbr{\frac{N_0 t}{N_0 t + N_1 s} - \frac{N_1 s}{ N_0 t + N_1 s}} a \\
    &= \frac{N_0 t-N_1 s}{N_0 t + N_1 s} a \\
    c_1^U &= \rbr{\frac{N_0 t}{N_0 t + N_1 s} - \frac{N_1 s}{ N_0 t + N_1 s}} a \\
    &= \frac{N_0 s-N_1 t}{N_1 t + N_0 s} a
\end{align*}
and the distance between the two updated centers is:
\begin{align*}
   c_0^U - c_1^U = \frac{2 N_0 N_1 (t^2 - s^2)a}{(N_1 t + N_0 s)(N_0 t + N_1 s)}.
\end{align*}
Since we have that $t = 1$, defining $r = N_0 / N_1$ then gives
\begin{align}
    c_0^U - c_1^U = \frac{2 r(1-s^2)a}{(1 + r s) (r + s)}. \label{eq:lower_mode_dist}
\end{align}

By contrast, if we apply the Eq.~\eqref{eq:doubly_attention} updates for the \DNA~scheme, the new center position of the doubly-normalized attention scheme are:
\begin{align*}
    &c_0^D\\
    =& \frac{\frac{N_0 ta}{N_0 t + N_1 s}}{\frac{N_0 t}{N_0 t + N_1 s} + \frac{N_1 s}{N_0 s + N_1 t}} - \frac{\frac{N_1 sa}{ N_0 s + N_1 t}}{\frac{N_0 t}{ N_0 t + N_1 s} + \frac{N_1 s}{ N_0 s + N_1 t}} \\
    =& \frac{N_0 t (N_0 s + N_1 t) - N_1 s (N_0 t + N_1 s)}{N_0 t (N_0 s + N_1 t) + N_1 s (N_0 t + N_1 s)} a, \\
    &c_1^D\\
    =& \frac{\frac{N_0 sa}{N_0 t + N_1 s}}{\frac{N_0 s}{N_0 t + N_1 s} + \frac{N_1 t}{N_0 s + N_1 t}} - \frac{\frac{N_1 ta}{ N_0 s + N_1 t}}{\frac{N_0 s}{ N_0 t + N_1 s} + \frac{N_1 t}{ N_0 s + N_1 t}} \\
    =& \frac{N_0 s (N_0 s + N_1 t) - N_1 t (N_0 t + N_1 s)}{N_0 s (N_0 s + N_1 t) + N_1 t (N_0 t + N_1 s)} a,
\end{align*}
and the distance between the two updated centers is:
\begin{align*}
   & c_0^D - c_1^D \\
   =& 2 N_1 a\{\frac{t(N_0 t + N_1 s)}{N_0 s (N_0 s + N_1 t) + N_1 t(N_0 t + N_1 s)} \\
   &- \frac{s(N_0 t + N_1 s)}{N_0 t (N_0 s + N_1 t) + N_1 s(N_0 t + N_1 s)}\}.
\end{align*}
Since again $t = 1$, defining $r = N_0 / N_1$ and $q = \frac{N_0 t + N_1 s}{N_0 s + N_1 t} = \frac{r + s}{rs + 1}$
then yields
\begin{align}
    c_0^D - c_1^D = \frac{2qr(1-s^2)a}{(q+rs)(r+sq)}. \label{eq:upper_mode_dist}
\end{align}
We plot the values of Eq.~\eqref{eq:lower_mode_dist} and Eq.~\eqref{eq:upper_mode_dist} on the $y$-axis against that of $r = N_0 / N_1$ on the $x$-axis, for several different $a$ values, see Fig.~\ref{fig:cnt_dist_1d}.
We see that in both cases the distance between the two centers decays after the attention updates.
However, the center distance of \DNA~ always upper bounds the one of \UNA, with the gap getting larger as the cluster sizes get more unbalanced ($r \neq 1$). The above result holds for the 2-D example in Fig.~\ref{fig:mode_collapse_2d_unbalance} as well, where the \UNA~collapses to a single cluster after 4 steps (left) while the \DNA~ maintains two separate clusters (right).

\begin{figure}[hbt!]
\centering
\begin{subfigure}{.22\textwidth}
  \centering
  \includegraphics[width=1\linewidth]{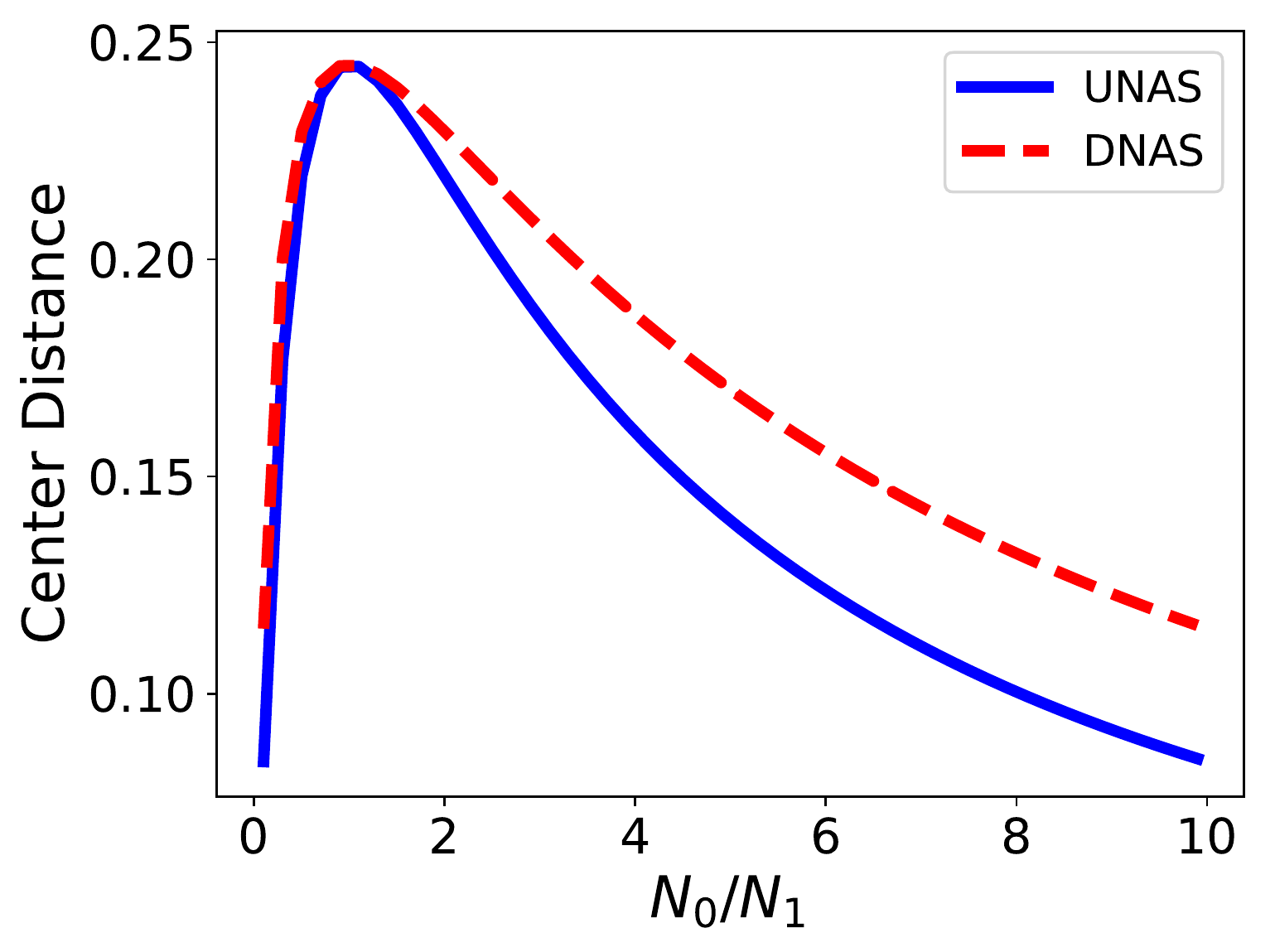}
  \caption{$a=0.5$}
\end{subfigure}
\begin{subfigure}{.22\textwidth}
  \centering
  \includegraphics[width=1\linewidth]{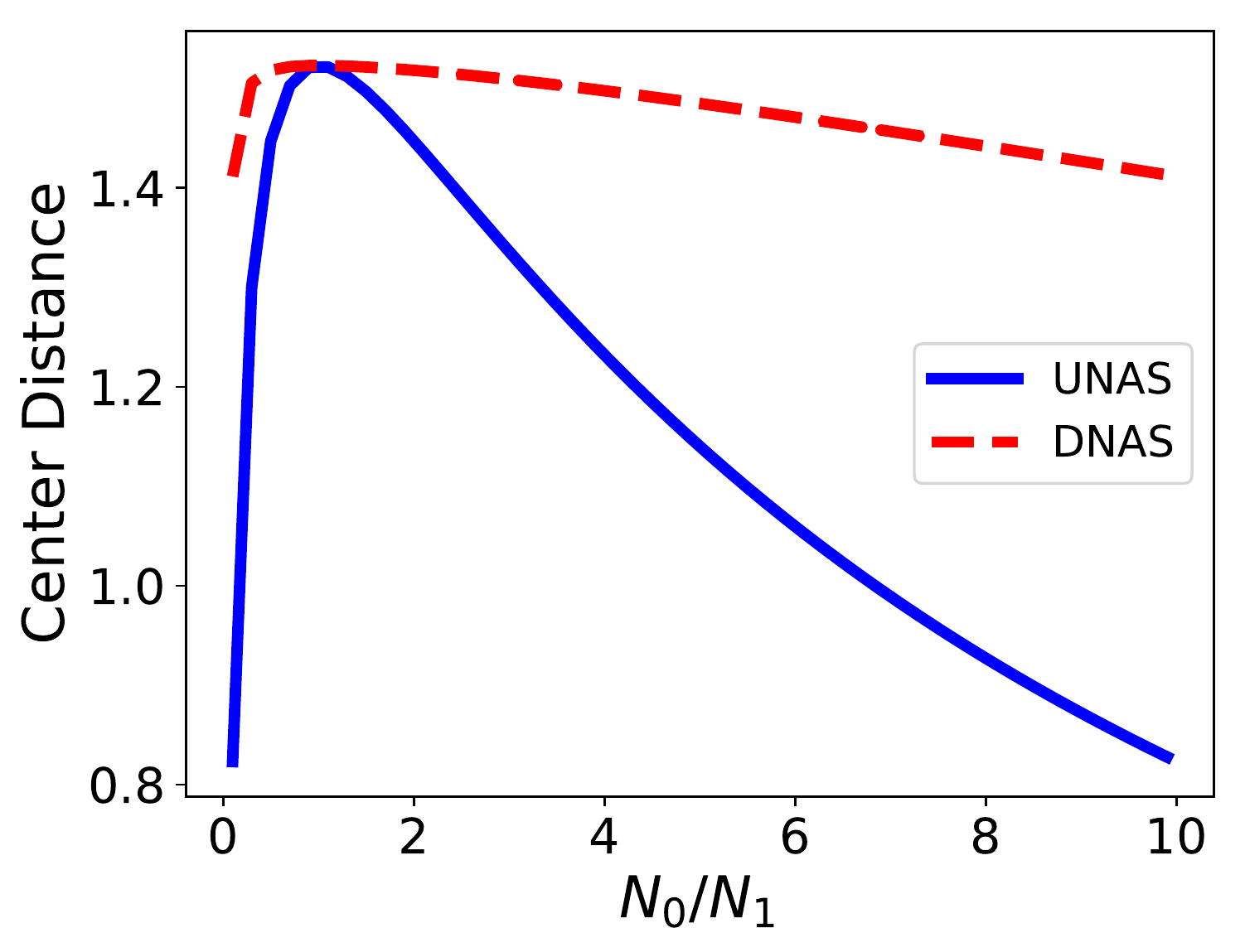}
  \caption{$a=1$}
\end{subfigure}
%\begin{subfigure}{.22\textwidth}
%  \centering
%  \includegraphics[width=1\linewidth]{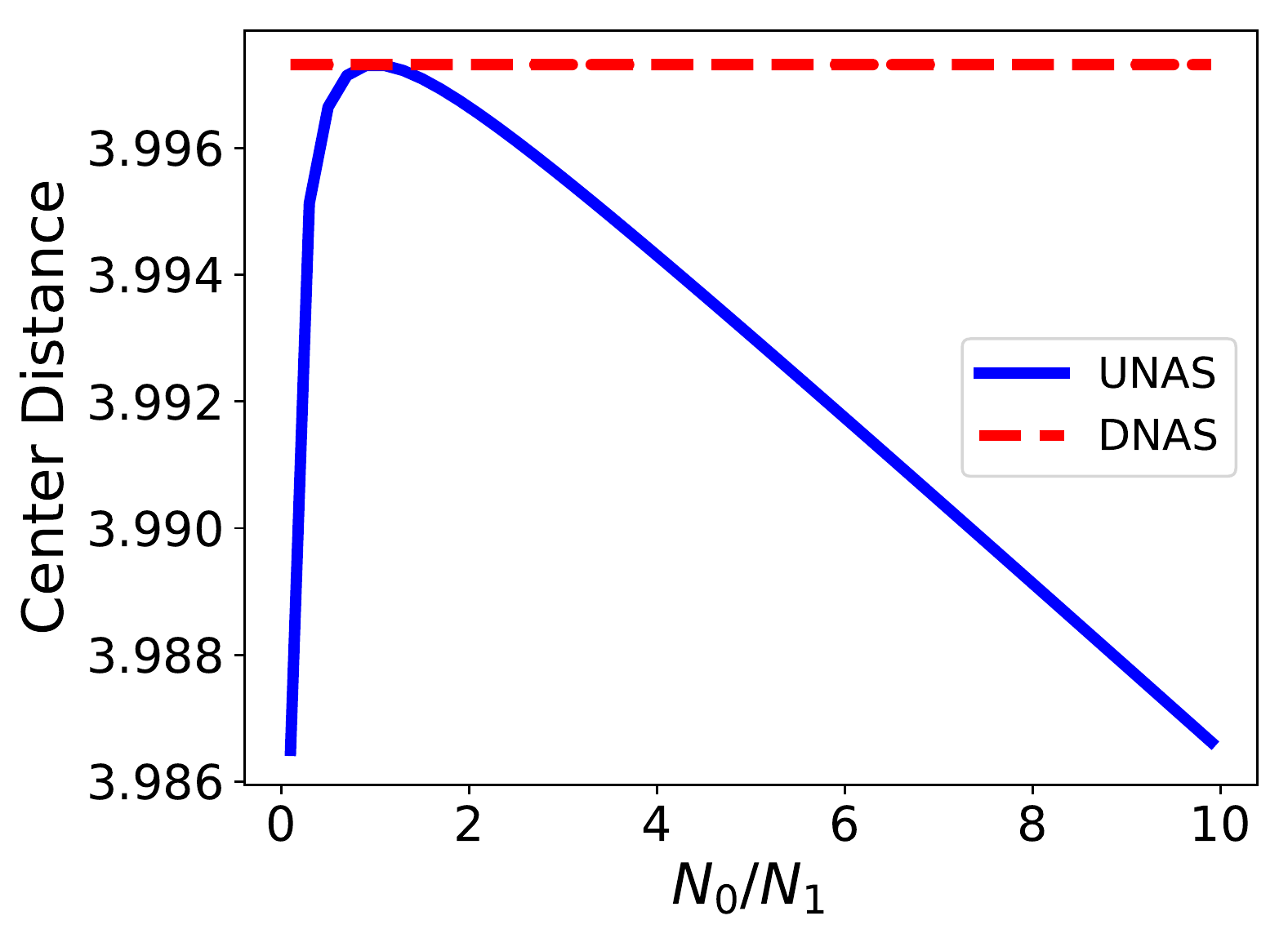}
%  \caption{$a=2$}
%\end{subfigure}

\caption{Center distance values after \UNA~(blue solid curve) and \DNA~(red dashed curve), as a function of cluster mass ratio $r=N_0/N_1$ with different $a$ values (initial distance between centers is $2a$).}
\label{fig:cnt_dist_1d}
\end{figure}

The mode collapse effect is even more obvious in multi-layer attention.
In Fig.~\ref{fig:mode_collapse_2d_balance}, when the two clusters are balanced (both clusters contain 225 data points), both normalization schemes yield similar results.
However, when the two clusters are unbalanced (the red cluster contains 500 points and the blue one contains 50) (Fig.~\ref{fig:mode_collapse_2d_unbalance2}), \UNA~collapses to a single cluster after 4 steps, while the \DNA~maintains two separate clusters.

\begin{figure}[hbt!]
\begin{subfigure}{.23\textwidth}
  \centering
  \includegraphics[width=1\linewidth]{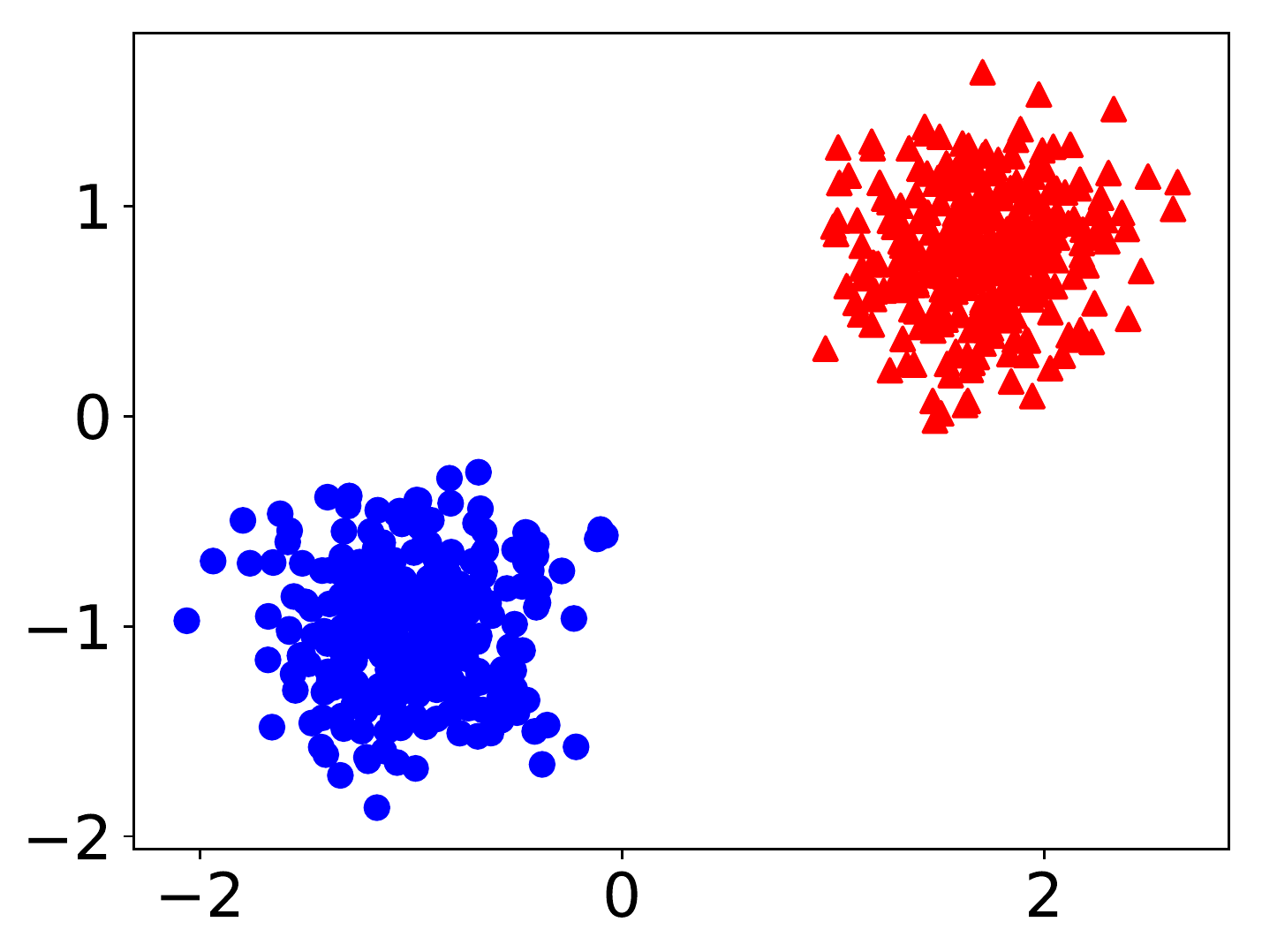}
  \label{fig:sfig1}\vspace{-5mm}
  \caption{\UNA, step 0}
\end{subfigure}
\begin{subfigure}{.23\textwidth}
  \centering
  \includegraphics[width=1\linewidth]{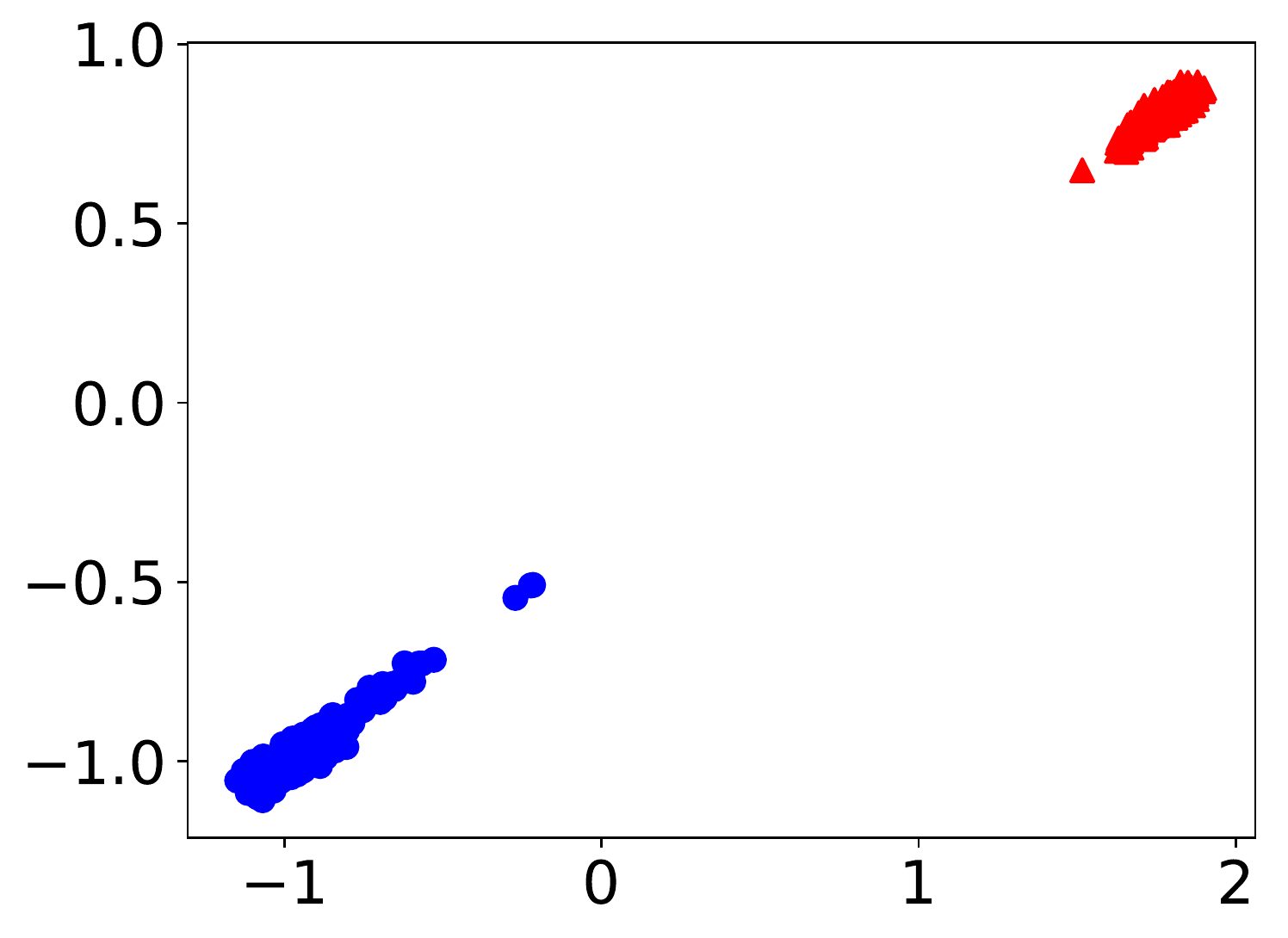}
  \label{fig:sfig2}\vspace{-5mm}
  \caption{\UNA, step 1}
\end{subfigure}
\begin{subfigure}{.23\textwidth}
  \centering
  \includegraphics[width=1\linewidth]{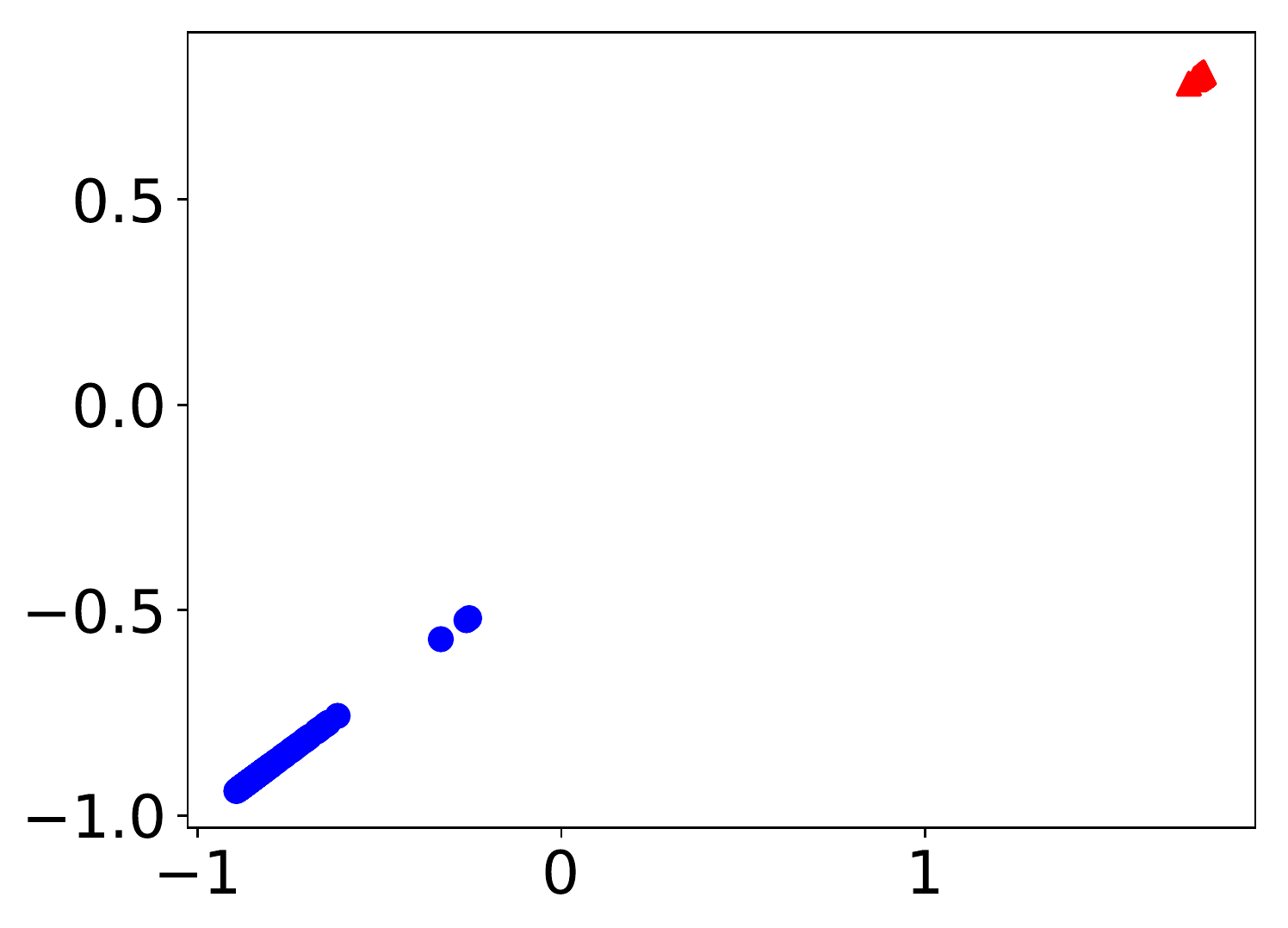}
  \label{fig:sfig2}\vspace{-5mm}
  \caption{\UNA, step 2}
\end{subfigure}
\begin{subfigure}{.23\textwidth}
  \centering
  \includegraphics[width=1\linewidth]{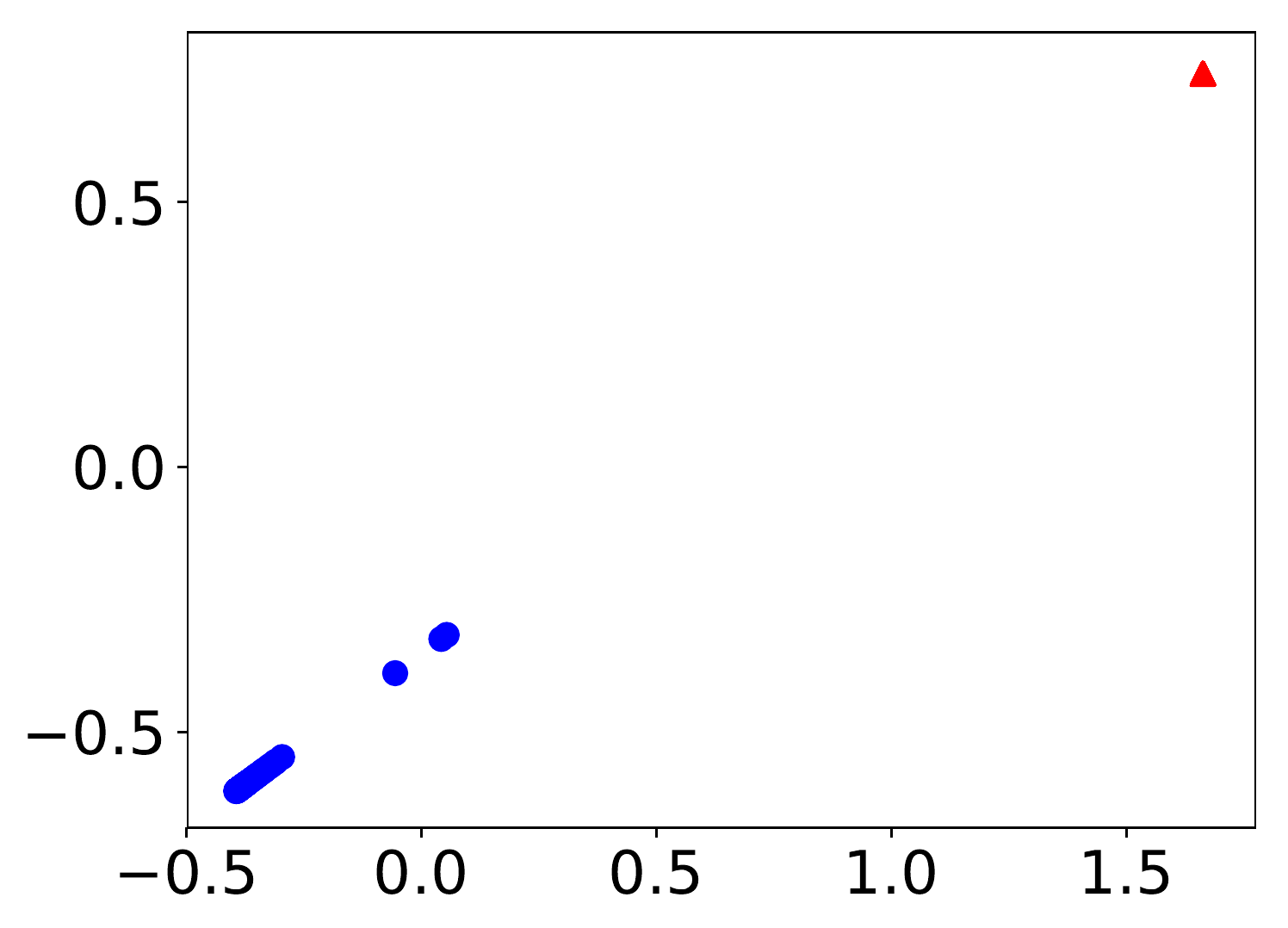}
  \label{fig:sfig1}\vspace{-5mm}
  \caption{\UNA, step 4}
\end{subfigure}

\begin{subfigure}{.23\textwidth}
  \centering
  \includegraphics[width=1\linewidth]{figs/b_data.pdf}
  \label{fig:sfig1}\vspace{-5mm}
  \caption{\DNA, step 0}
\end{subfigure}%
\begin{subfigure}{.23\textwidth}
  \centering
  \includegraphics[width=1\linewidth]{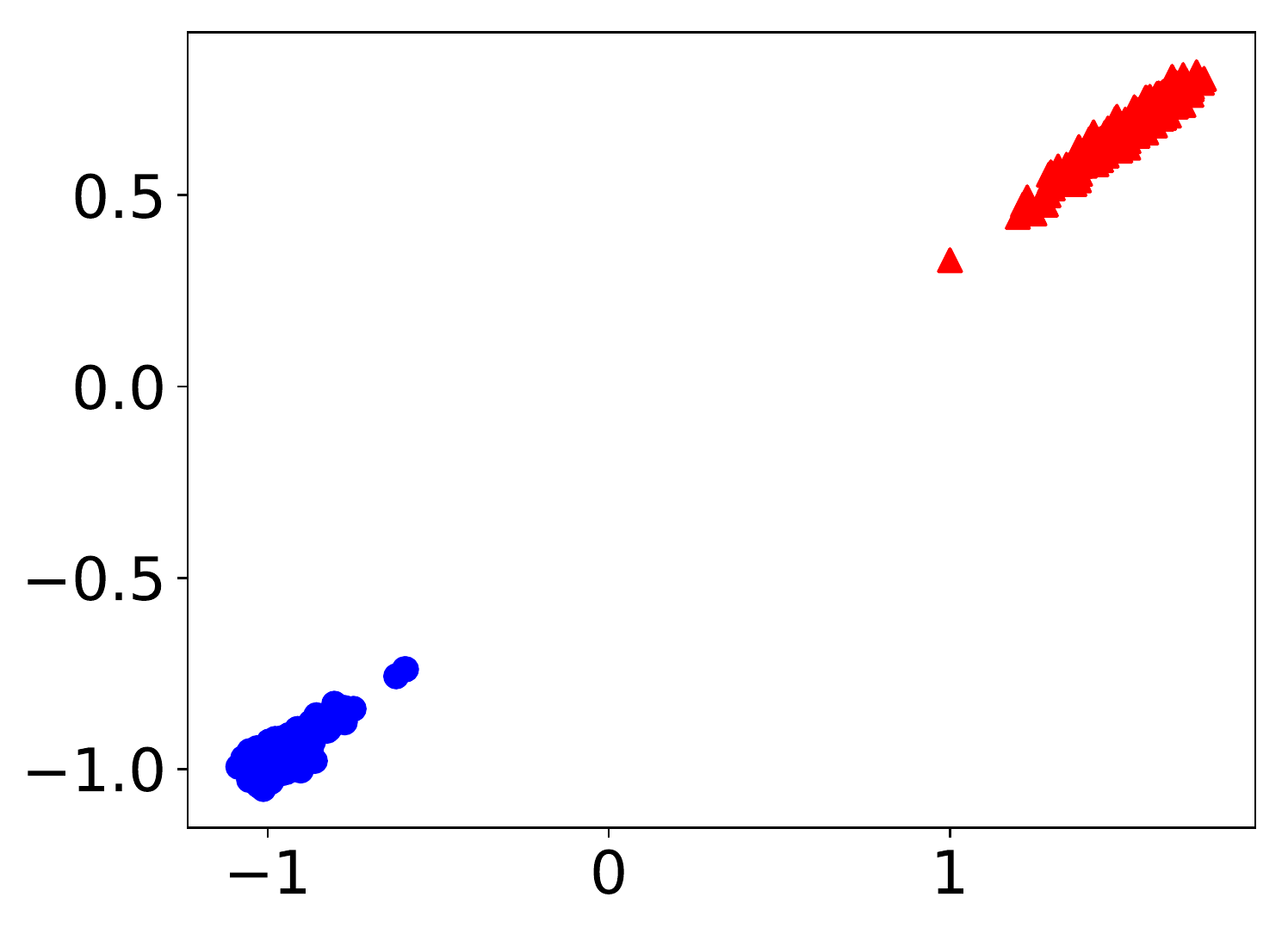}
  \label{fig:sfig2}\vspace{-5mm}
  \caption{\DNA, step 1}
\end{subfigure}
\begin{subfigure}{.23\textwidth}
  \centering
  \includegraphics[width=1\linewidth]{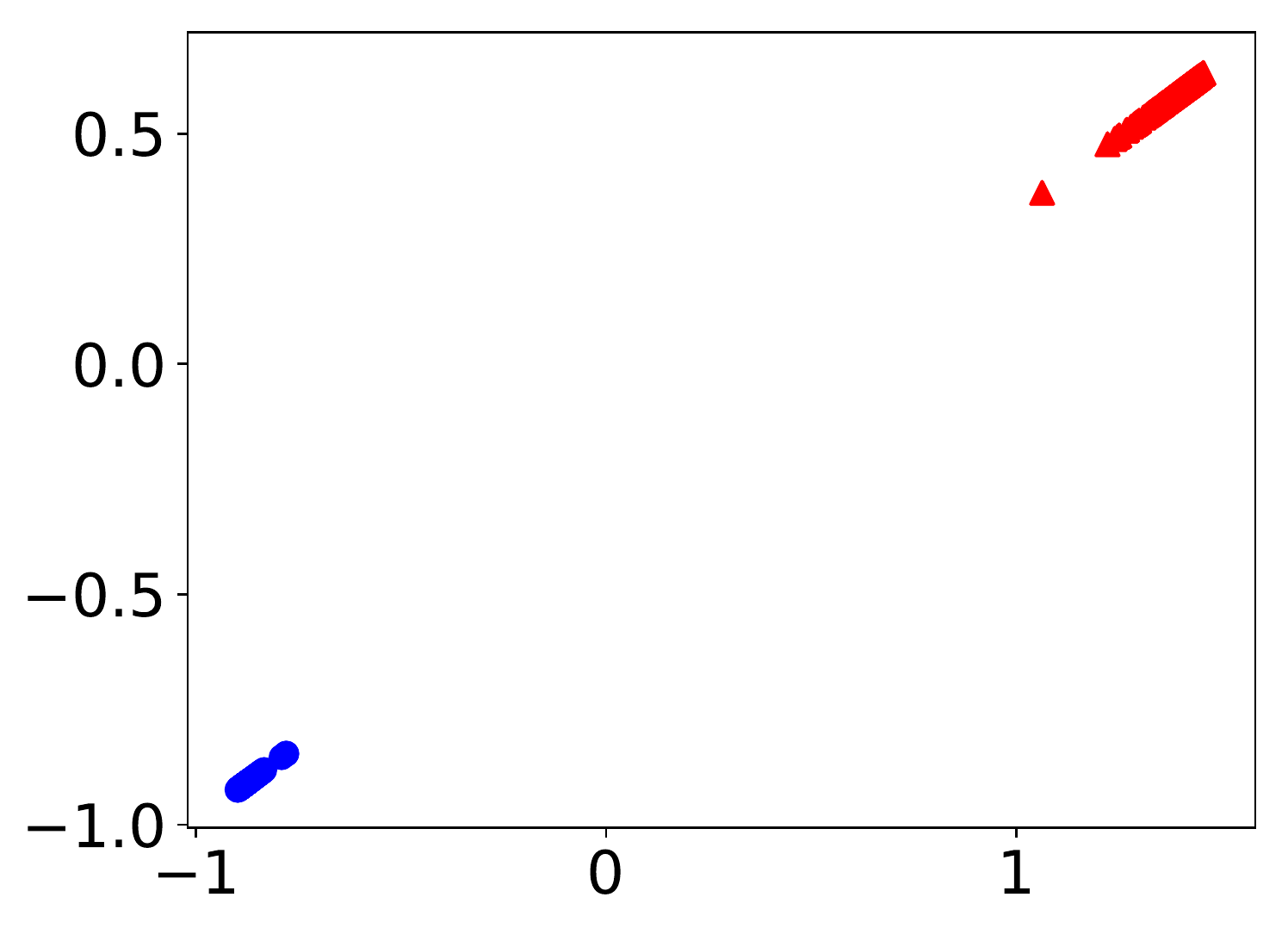}
  \label{fig:sfig2}\vspace{-5mm}
  \caption{\DNA, step 2}
\end{subfigure}
\begin{subfigure}{.23\textwidth}
  \centering
  \includegraphics[width=1\linewidth]{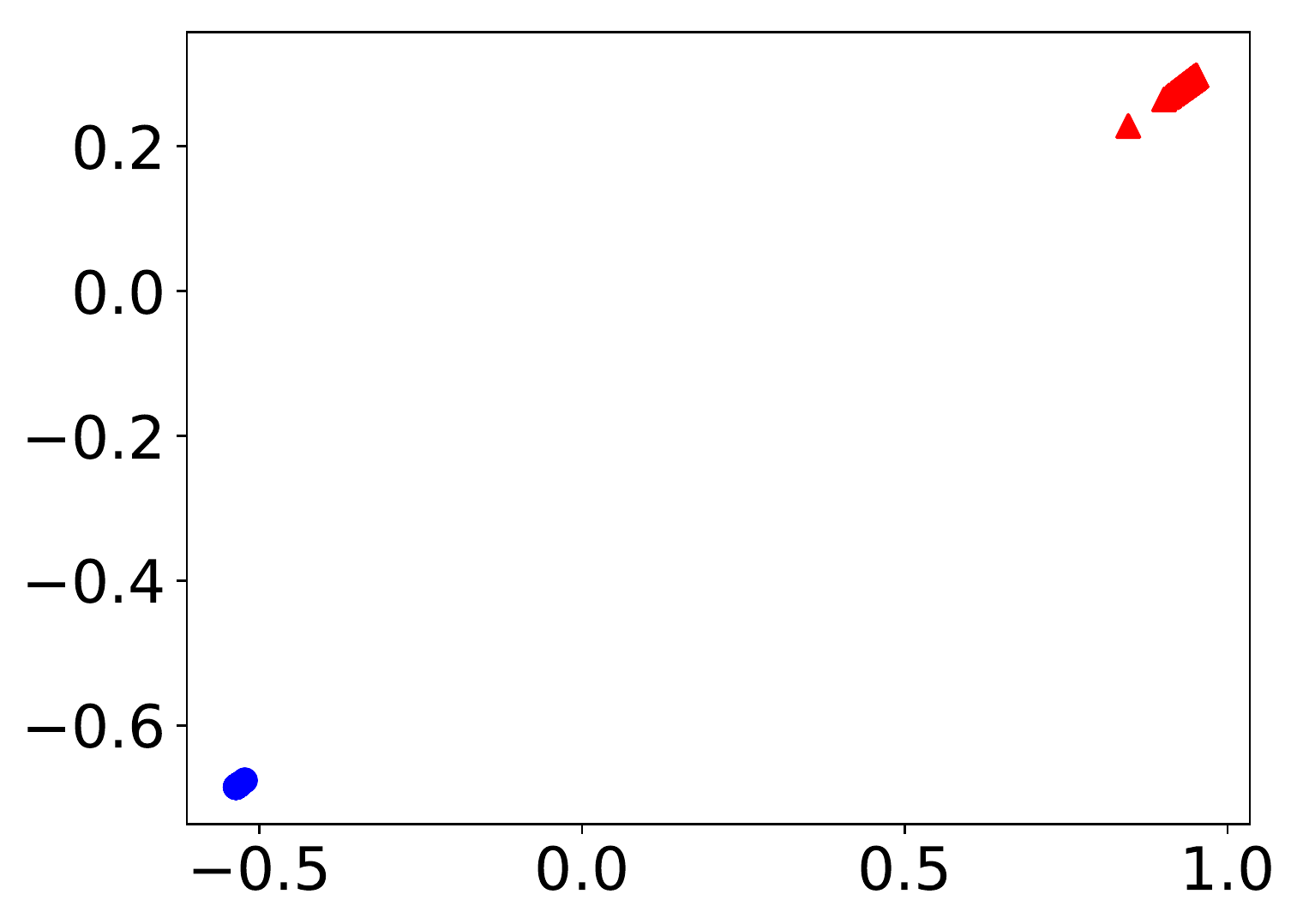}
  \label{fig:sfig1}\vspace{-5mm}
  \caption{\DNA, step 4}
\end{subfigure}%
\vspace{-2mm}
\caption{Mode-collapsing behavior on balanced mixture of Gaussian data: \UNA~and \DNA~behave similarly without mode collapsing after 4 steps.}
\label{fig:mode_collapse_2d_balance}
\end{figure}

\begin{figure}[hbt!]
\begin{subfigure}{.23\textwidth}
  \centering
  \includegraphics[width=1\linewidth]{figs/ub_data.pdf}
  \label{fig:sfig1}\vspace{-5mm}
  \caption{\UNA, step 0}
\end{subfigure}%
\begin{subfigure}{.23\textwidth}
  \centering
  \includegraphics[width=1\linewidth]{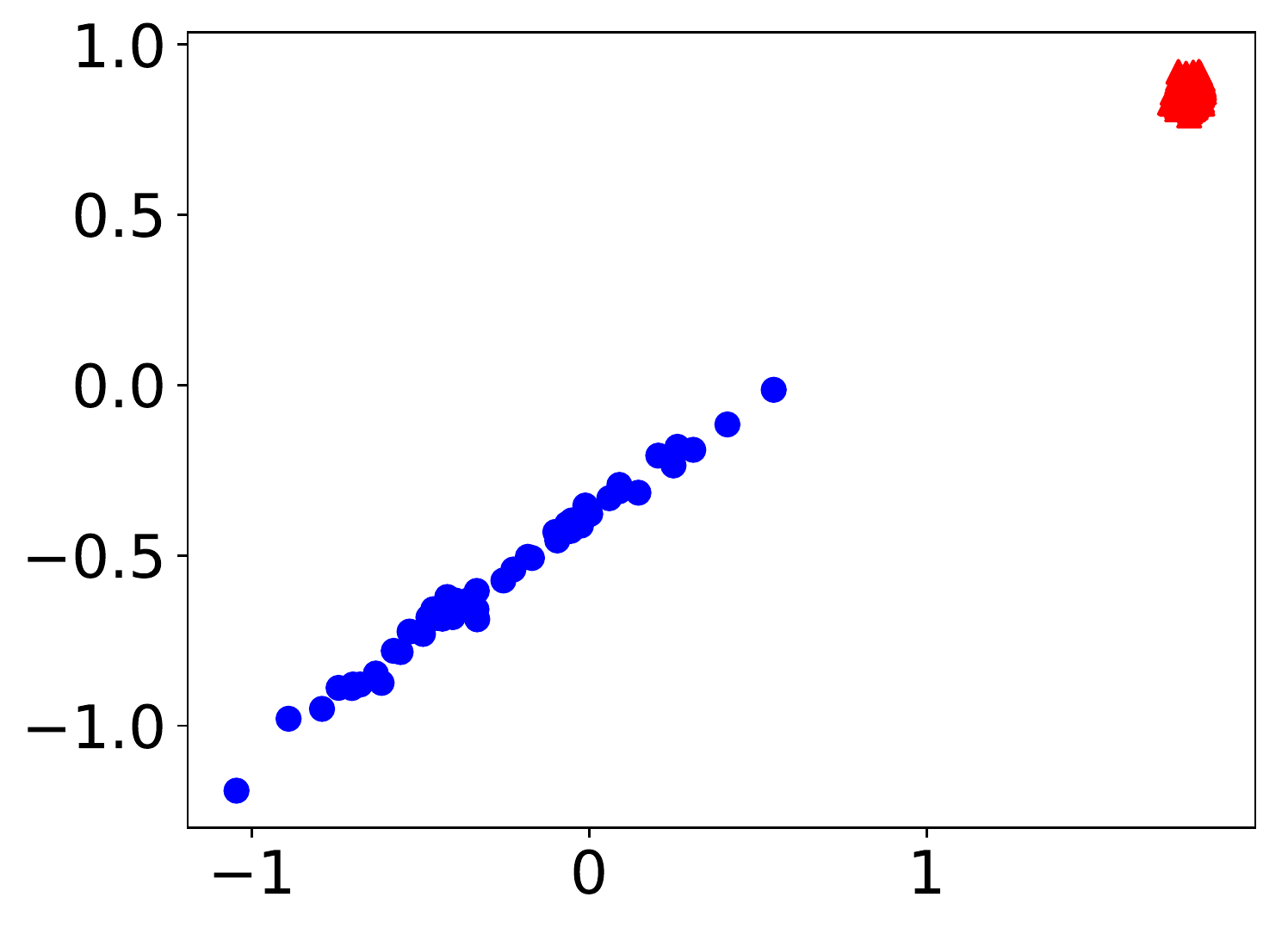}
  \label{fig:sfig2}\vspace{-5mm}
  \caption{\UNA, step 1}
\end{subfigure}
\begin{subfigure}{.23\textwidth}
  \centering
  \includegraphics[width=1\linewidth]{figs/ub_LN_step2.pdf}
  \label{fig:sfig2}\vspace{-5mm}
  \caption{\UNA, step 2}
\end{subfigure}
\begin{subfigure}{.23\textwidth}
  \centering
  \includegraphics[width=1\linewidth]{figs/ub_LN_step4.pdf}
  \label{fig:sfig1}\vspace{-5mm}
  \caption{\UNA, step 4}
\end{subfigure}%

\begin{subfigure}{.23\textwidth}
  \centering
  \includegraphics[width=1\linewidth]{figs/ub_data.pdf}
  \label{fig:sfig1}\vspace{-5mm}
  \caption{\DNA, step 0}
\end{subfigure}%
\begin{subfigure}{.23\textwidth}
  \centering
  \includegraphics[width=1\linewidth]{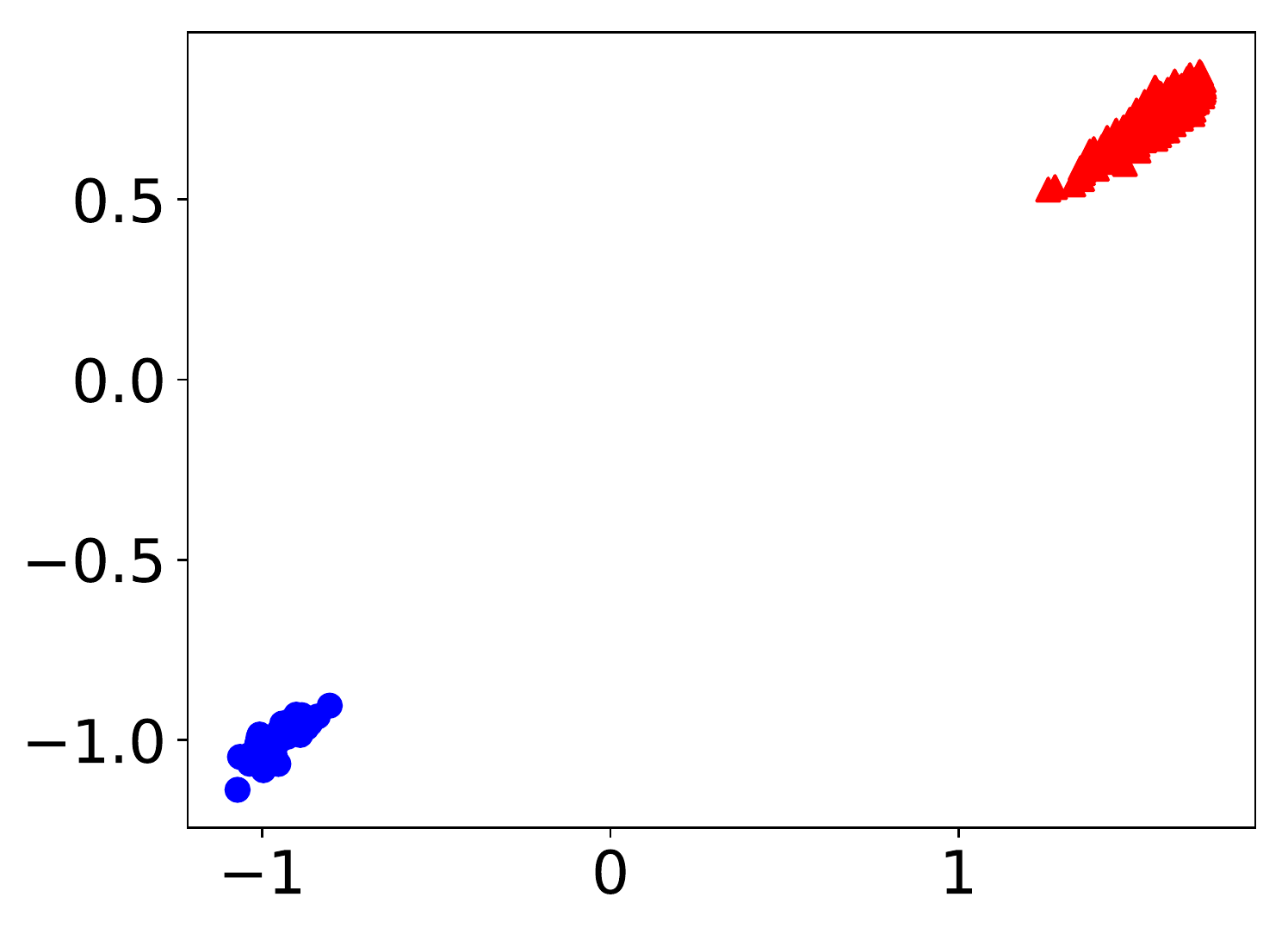}
  \label{fig:sfig2}\vspace{-5mm}
  \caption{\DNA, step 1}
\end{subfigure}
\begin{subfigure}{.23\textwidth}
  \centering
  \includegraphics[width=1\linewidth]{figs/ub_UN_step2.pdf}
  \label{fig:sfig2}\vspace{-5mm}
  \caption{\DNA, step 2}
\end{subfigure}
\begin{subfigure}{.23\textwidth}
  \centering
  \includegraphics[width=1\linewidth]{figs/ub_UN_step4.pdf}
  \label{fig:sfig1}\vspace{-5mm}
  \caption{\DNA, step 4}
\end{subfigure}%
\vspace{-2mm}
\caption{Mode-collapsing behavior on unbalanced mixture of Gaussian data: \UNA~collapses to one cluster after 4 steps, while \DNA~maintains 2 clusters.}
\label{fig:mode_collapse_2d_unbalance2}
\end{figure}

\end{document}